%

\documentclass[journal]{IEEEtran}

\usepackage{times}
\usepackage{epsfig}
\usepackage{graphicx}
\usepackage{amsmath}
\usepackage{amssymb}
\usepackage{bm}
\usepackage{subfigure}
\usepackage{caption}
\usepackage{enumerate}
\usepackage{threeparttable}
\usepackage{array}
\usepackage{ragged2e}
\renewcommand{\raggedright}{\leftskip=0pt \rightskip=0pt plus 0cm}
\newtheorem{theorem}{Theorem}
\newtheorem{proof}{Proof}

\ifCLASSINFOpdf
\else
\fi
%
%

\hyphenation{op-tical net-works semi-conduc-tor}

\begin{document}
%
\title{A Systematic IoU-Related Method: Beyond Simplified Regression for Better Localization}
%
%
%

\author{Hanyang Peng,
        Shiqi Yu~\IEEEmembership{Member,~IEEE}
\thanks{Hanyang Peng and Shiqi Yu are with the Department of Computer Science and Engineering, Southern University of Science and Technology, Shenzhen,
China, 518055. Shiqi Yu is the corresponding author (email: yusq@sustech.edu.cn).} 
\thanks{The work is supported in part by the National Science Foundation of China (Grant No. 61806128, 61976144) and the National Key Research and Development Program of China (Grant No. 2020AAA0140002).}}

\maketitle

\begin{abstract}
Four-variable-independent-regression localization losses, such as Smooth-$\ell_1$ Loss, are used by default in modern detectors.
Nevertheless, this kind of loss is oversimplified so that it is  inconsistent with the final evaluation metric, intersection over union (IoU).  Directly employing the standard IoU is also not infeasible, since the constant-zero plateau in the case of non-overlapping boxes and the non-zero gradient at the minimum  may make it not trainable. Accordingly, we propose a systematic method to address these problems. Firstly, we propose a new metric, the extended IoU (EIoU),  which is well-defined when two boxes are not overlapping and reduced to the standard IoU when overlapping. Secondly, we present the convexification technique (CT) to construct a loss on the basis of  EIoU, which can guarantee the gradient at the minimum to be zero.  Thirdly, we propose a steady optimization technique (SOT) to make the fractional EIoU loss approaching the minimum more steadily and smoothly. Fourthly, to fully exploit the capability of the EIoU based loss,  we introduce an interrelated IoU-predicting head to further boost localization accuracy. With the proposed contributions, the new method incorporated into Faster R-CNN with ResNet50+FPN as the backbone yields  \textbf{4.2 mAP} gain on VOC2007 and \textbf{2.3 mAP} gain on COCO2017 over the baseline Smooth-$\ell_1$ Loss, at almost \textbf{no training and inferencing computational cost}. Specifically,  the stricter the metric is, the more notable the gain is, improving \textbf{8.2 mAP} on VOC2007 and \textbf{5.4 mAP} on COCO2017 at metric $AP_{90}$.
\end{abstract}

\begin{IEEEkeywords}
Object Detection, Loss Function, IoU, Optimization
\end{IEEEkeywords}

%
\IEEEpeerreviewmaketitle

\section{Introduction}

Object detection is a heavily-investigated topic in the computer vision community, because it is fundamental and the prerequisite for many other vision tasks, such as instance segmentation \cite{Mask_RCNN_2017, ins_seg_2016} and high-level object-based reasoning \cite{NIPS2017}.

\begin{figure}[t]
    \centering
    \includegraphics[width=1.2\linewidth]{./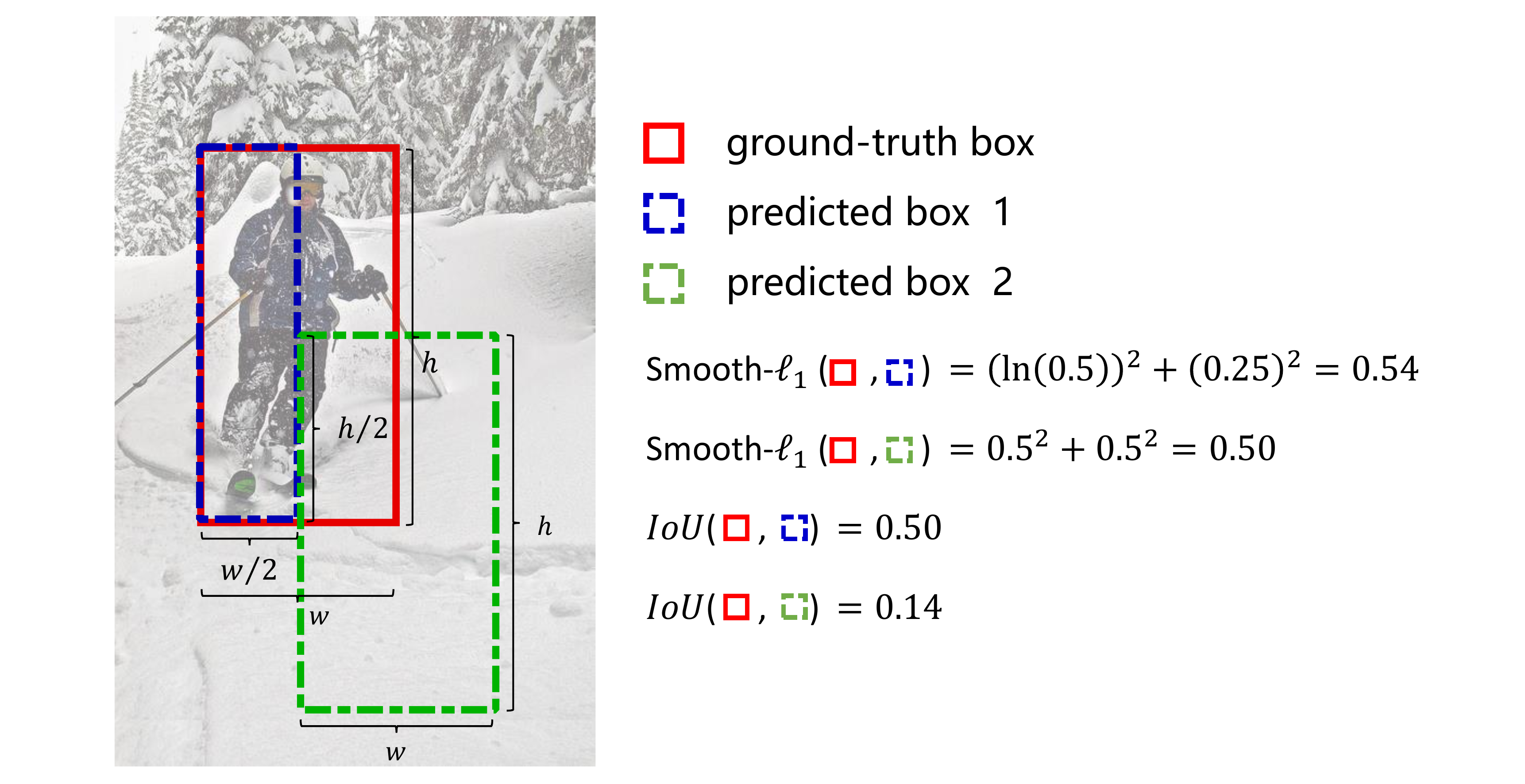}
    \caption{\small{ Illustration of the \emph{misalignment} between  Smooth-$\ell_1$ Loss and the metric $IoU$. Although  the predicted box 1 matches the ground-truth box better than the predicted 2 ($IoU$: 0.50 vs 0.14),    Smooth-$\ell_1$ Loss of the ground-truth box and the predicted box 1 is larger than that of the ground-truth box  and the predicted box 2 (Smooth-$\ell_1$ Loss: 0.54 vs 0.50).}}
    \label{Fig.1}
    \vspace{-1em}
\end{figure}

With the advent of deep CNNs \cite{Krizhevsky2012ImageNet, BatchNorm2015, ResNet2016} in  recent years, the performance of object detection has progressed substantially. There are generally two possible approaches to improve detection accuracy besides increasing samples: constructing ingenious architectures and devising better losses.
Constructing CNN architectures have made great strides in the past years \cite{Fast_RCNN_2015, Faster_RCNN_2017, FPN2017, Cascade_RCNN_2018, YOLOv3_2018, SSD_2016, Focal_loss_retinanet_2017, He2017Deep, Cheng1, Cheng2, Cheng3}.  One tendency  is  to design more and more sophisticated architectures for better performance, but this way commonly will increase the computational cost. In contrast, devising better losses is more economical,  since we can obtain the improvement at little cost of extra training and inferencing time. However, research on devising losses, especially localization losses, received much less attention in past years. Since R-CNN introduced a four-variable-independent-regression loss for localization in 2013~\cite{RCNN_2013}, the localization loss in modern deep detectors changed little. Although the four-variable-independent-regression loss is simple and straightforward,  it is not consistent with the final detection performance metric, $IoU$.   The gap between the four-variable-independent-regression loss and $IoU$ will inevitably result in some misaligned cases -- the loss is small, but $IoU$ is also small, which means the predicted box and the ground-truth box overlap little, and vice versa. An example in Fig 1 visually illustrates this gap between Smooth-$\ell_1$ Loss (the most widely-used four-variable-independent-regression loss) and $IoU$.  It is intuitive that equipping an $IoU$ related loss can address this problem. 

However,  the standard $IoU$ based losses did not popularize in the past years, since there are  two intrinsic deficiencies in the standard $IoU$. {(i)} When the predicted box and the ground-truth box do not overlap, the standard $IoU$  itself is ill-defined since the value is constant zero. Then, the gradient of any standard $IoU$ based loss will also become zero,  so the backpropagation cannot pull the predicted box close to the ground-truth in this non-overlapping case. {(ii)}  The gradient of a simple standard $IoU$ loss at the minimum where two boxes completely overlap is non-zero, which will bring about oscillation and slow convergence when applying gradient descent algorithms. Very recently, \cite{GIOU_2019} pioneeringly proposed  $GIoU$  that adds a regularization term after a standard $IoU$  loss, and then the new loss has non-zero gradients  when two boxes are not overlapping.  However,  the regularization term  also makes {$GIoU$} not equivalent to the standard $IoU$ when two boxes are overlapping. Hence the performance of {$GIoU$} might be suboptimal as the standard $IoU$ is the final evaluation metric. Moreover, $GIoU$ Loss  still does not overcome  problems of oscillation and slow convergence.  \cite{DIoU2020} presented $CIoU$ loss by incorporating the normalized distance between two boxes into the standard $IoU$. Actually, $CIoU$ can be considered as the combination of a four-variable-independent-regression loss and a standard $IoU$ loss. $CIoU$ converges much faster than $GIoU$, but it still can not avoid oscillation due to non-zero gradients at the minimum.

In this paper,  we will propose a systematic method to tackle all the problems above and introduce some new techniques to further improve the detection accuracy.
\begin{itemize}
    \item We  propose a more generalized and well-defined $IoU$, namely $EIoU$. In the case of overlapping bounding boxes, $EIoU$ is identical to the standard $IoU$, while in the case of non-overlapping boxes, $EIoU$ is smaller as two boxes separate further, which will make $EIoU$ trainable.
    \item We present a convexification technique (CT) to construct a new loss. It will lead the gradient to become zero at the minimum. So it is possible to achieve the minimum through gradient descent algorithms. Moreover, just like  Focal Loss, the convexification technique will adaptively assign higher weight on hard examples.
    \item We introduce a steady optimizing technique (SOT) to make the loss approach the minimum steadily and smoothly. The convergence of the steady optimization technique is  theoretically ensured.
    \item Harnessing the computed ground-truth $IoU$ score in the new loss above,  we add a single-layer head to be trained to predict this $IoU$ score. Then,  we can utilize the predicted IoU score to help non-maximum suppression(NMS) select more precise bounding boxes in the inferencing stage.
\end{itemize}

\section{Related Work}

\noindent\textbf{Architectures of CNN based detectors.} The architecture of modern CNN based detectors can be generally divided into two parts: the backbone network and the detection-specific network.  The backbones are commonly borrowed from the networks designed for categorization, of which VGG \cite{VGG_2015}, ResNet \cite{ResNet2016}, ResNeXt \cite{ResNeXt_2017}  are often leveraged. Besides, some specially-designed backbones for detection were also proposed in past years, such as DarkNet \cite{YOLO9000_2017} and DetNet \cite{DetNet_2018}, and Hourglass Net \cite{HourglassNet2016} are also frequently adopted.

There are two different logics to design a  detection-specific network. The first one is the two-stage network, and it consists of two sub-networks, where the first one is to generate a sparse set of candidate proposers, and the other is to determine the accurate location and categories based on the proposals. R-CNN  \cite{RCNN_2013}, fast R-CNN\cite{Fast_RCNN_2015} and Faster R-CNN  \cite{Faster_RCNN_2017} shaped the basic network architecture of two-stage detectors, and then R-FCN \cite{R-FCN_2016} replaced the fully-connected sub-network with a convolution sub-network to improve efficiency. FPN \cite{FPN2017} introduces a lateral network to produce object proposals at multiple scales with more contextual information.  Cascade R-CNN devised a cascade structure and it improves performance substantially \cite{Cascade_RCNN_2018}. \cite{IoUNet_2018} proposed IoU-Net and IoU guided NMS to acquire location confidence for accurate detection.  Grid R-CNN \cite{GridRCNN2019} can capture the spatial information explicitly and enjoys the position-sensitive property of fully convolutional architecture. Very recently, TridentNet\cite{TrideNet2019} constructed a parallel multi-branch architecture aiming to generate scale-specific feature maps with a uniform representational power.

Another one is the one-stage network, which directly predicts the locations and categories of the object instance. YOLO \cite{YOLO_2015} and SSD \cite{SSD_2016} first popularized the one-stage methods by much reducing the computational cost but still maintaining competitive performance. Then, DSSD \cite{DSSD_2017} and RON \cite{RON_2017}  introduced a network similar to the hourglass network to combine low-level and high-level information. RetinaNet~\cite{Focal_loss_retinanet_2017} with Focal loss as the one-stage detectors first outperformed the two-stage detectors. RefineDet \cite{RefineDet2018} designed the anchor refinement module and the object detection module to reduce negative boxes and improve detection.  CornerNet \cite{CornerNet_2018} is an anchor-free framework and adopts two subnetworks to detect the top-left and bottom-right key points and then employs a grouping subnetwork to pair them. Later some other competitive anchor-free detectors, such as FSAF \cite{FSAF2019}, FCOS \cite{FCOS2019} and CenterNet \cite{zhou2019objects, CenterNet2019},  were further developed.

These ingenious architectures significantly promoted the evolution of object detection. It is worth noting that the improvement of detection performance is partly attributed to  sophisticated  backbones and detection-specific networks that will commonly bring extra computational cost.

\noindent\textbf{Losses of CNN based detectors.} Compared with the design of architectures, the exploration of losses is more economical, because a well-devised loss can obtain performance gain with little additional train time cost and no extra test time cost. However, research on losses for detection has been underestimated for a long time.

Modern CNN based detectors were popularized by R-CNN in 2013~\cite{RCNN_2013},  and it introduced the softmax loss and a four-variable-independent-regression loss for classification and localization. Since then, this type of classification loss and localization loss  became mainstream and were applied to  most detectors. As for the classification loss,  YOLO~\cite{YOLO_2015} used to  employ the $\ell_2$ loss for categorization, but the later improved YOLO9000~\cite{YOLO9000_2017} gets back on track to reuse the softmax loss. Afterwards, Focal Loss ~\cite{Focal_loss_retinanet_2017} was specially developed to address extreme foreground-background ratio problem in one-stage detectors. It can adaptively down-weight overwhelming well-classified background examples to enjoy better detection performance. Recently, \cite{Cheng1} exploits  new losses to address the object rotation problem and the within-class diversity problem.

In terms of localization loss, Fast R-CNN substitutes the four-variable-independent-regression $\ell_2$ loss using in R-CNN with  Smooth-$\ell_1$ loss \cite{Fast_RCNN_2015}. The localization loss of the latter CNN based detectors mostly follow  Smooth-$ell_1$ loss with no or little change \cite{Faster_RCNN_2017, YOLO_2015, Focal_loss_retinanet_2017, FPN2017, IoUNet_2018}. However, as illustrated in Section 1 and Fig 1, there is misalignment  between Smooth-$\ell_1$ loss and the evaluation metric of $IoU$. So \cite{IoU_Loss_2016} tried to introduce a standard $IoU$ based loss to address this problem. Nevertheless, the standard $IoU$  also has its own defect. As long as two boxes are mutually detached no matter how far the distance is, the standard $IoU$  will become constant zero, so that the gradient of a standard $IoU$ based loss will also become zero and the loss is not trainable in this case.  $GIoU$ \cite{GIOU_2019} introduced a well-designed term added after a standard IoU based loss, and then the new loss becomes non-zero when two boxes are separated. This pioneering work made great progress to make $IoU$ based loss feasible. But just the adding term makes this new loss no longer equal the standard $IoU$. Hence it may lead to an  unexpected result that  $GIoU$ Loss in some cases of overlapping boxes is larger than that  in some cases of non-overlapping boxes.

In this work, we will propose a systematical method to  tackle the problems above of existing localization losses.

\section{The Proposed Approach}
In this section, we will present this systematic approach.  We first introduce the standard $IoU$, and interpret its plight for handling the situation that two boxes are non-overlapping. Next, we will show how we devise a new extended $IoU$ that can overcome the difficulty above. Then, we will use a convexification technique/focal technique to construct an extended $IoU$ based loss. Afterward, we will provide a steady optimization technique to make the training process steadily and smoothly. Finally, we will present an interrelated IoU-predicting head to select more precise predicted bounding boxes.

\subsection{ Standard IoU}

Constructing an $IoU$ based loss is an intuitive way to tackle the unappealing problems that the four-variable-independent-regression losses bring. However, the standard IoU ($SIoU$)  has some deficiencies that hinder the prevalence of $IoU$ based losses, and we will elaborate it in the following.

Given the targeted bounding boxes with a tuple $\left(x_1^t, y_1^t, x_2^t, y_2^t\right)$ and the predicted box with a tuple $\left(x_1^p, y_1^p, x_2^p, y_2^p \right)$, where $x_1$, $y_1$ and $x_2$, $y_2$ are the coordinate value of the top-left and bottom-right corners of the bounding boxes, respectively. When two boxes are overlapping,  the definition of the standard $SIoU$  is
 \begin{align}
&x_1 = \max\left(x_1^t,~ x_1^p \right), \label{Eq.1}\\
&y_1 = \max\left(y_1^t,~ y_1^p \right), \\
&x_2 = \min\left(x_2^t,~ x_2^p \right), \\
&y_2 = \min\left(y_2^t,~ y_2^p \right), \label{Eq.4}\\
&I_{\rm{std}} = (x_2 - x_1)(y_2 - y_1), \\
&S_t = (x_2^t - x_1^t)(y_2^t - y_1^t),  \label{Eq.6}\\
&S_p = (x_2^p - x_1^p)(y_2^p - y_1^p), \\
&U_{\rm{std}} = S_t + S_p - I_{\rm{std}},\label{Eq.8} \\
&SIoU = \frac{I_{\rm{std}}}{U_{\rm{std}}} \label{Eq.9}.
 \end{align}

However, when two boxes are not overlapping, the value of the intersection $I_{\rm{std}}$ and $SIoU$  is constant $0$,  which will bring two drawbacks. 
\begin{itemize}
\item  $SIoU$ cannot distinguish whether the two boxes are just in the vicinity or they are separated remotely. 
\item The gradient of the $SIoU$ for backpropagation  will also become zero. 
\end{itemize}
Hence $SIoU$  is not trainable in this case\footnote{ Actually, $SIoU$ is not trainable only when all the pair boxes are non-overlapping. In practice, it is common there are overlapping pair boxes and non-overlapping pair boxes in a batch. Hence the total gradient of a batch might not be zero. However, the exist of non-overlapping boxes in a batch will still make the performance for $SIoU$ degrade which can be seen in Table I. }.

\subsection{ Extended IoU }

In this subsection, we introduce our extended IoU ($EIoU$) that is accurately equivalent to the standard $IoU$ in the case of overlapping  boxes and has non-zero gradients in the case of non-overlapping  boxes.

 Conserving the definition of Eq.(\ref{Eq.1}-\ref{Eq.4}), the extended intersection ($I_e$) is

\begin{align}
&x_0 = \min\left(x_1^t, ~ x_1^p \right)  \label{Eq.11}\\
&y_0 = \min\left(y_1^t, ~ y_1^p \right) \\
&x_{\min} = \min\left(x_1, ~ x_2 \right) \\
&y_{\min} = \min\left(y_1, ~ y_2 \right) \\
&x_{\max} = \max\left(x_1, ~ x_2 \right) \\
&y_{\max} = \max\left(y_1, ~ y_2 \right)\label{Eq.15}
\end{align}
\begin{equation}
\begin{aligned}
 I_e = &S_1 + S_2 + S_3 + S_4 \\
 =&(x_2 - x_0)(y_2 - y_0) + (x_{\min} - x_0)(y_{\min} - y_0)  \\
&- (x_1 - x_0)(y_{\max} - y_0) - (x_{\max} - x_0)(y_1 - y_0), \\
\end{aligned}
\label{Eq.17}
\end{equation}
where we define  $I_{\rm{e}} = S_1 + S_2 - S_3 -S_4$, in which $S_1$ is  area of the rectangle with top-left corner $(x_0,y_0)$ and bottom-right $(x_2, y_2)$; $S_2$ is area of the rectangle with top-left corner $(x_0,y_0)$ and bottom-right $(x_{\min}, y_{\min})$; $S_3$ is area of the rectangle with top-left corner $(x_0,y_0)$ and bottom-right $(x_1,y_{\max})$; $S_4$ is area of the rectangle with top-left corner $(x_0,y_{0})$ and bottom-right $(x_{\max}, y_{1})$.

\begin{figure}[tbp]

    \begin{minipage}{0.45\linewidth}
    \centerline{\includegraphics[width=1\linewidth]{./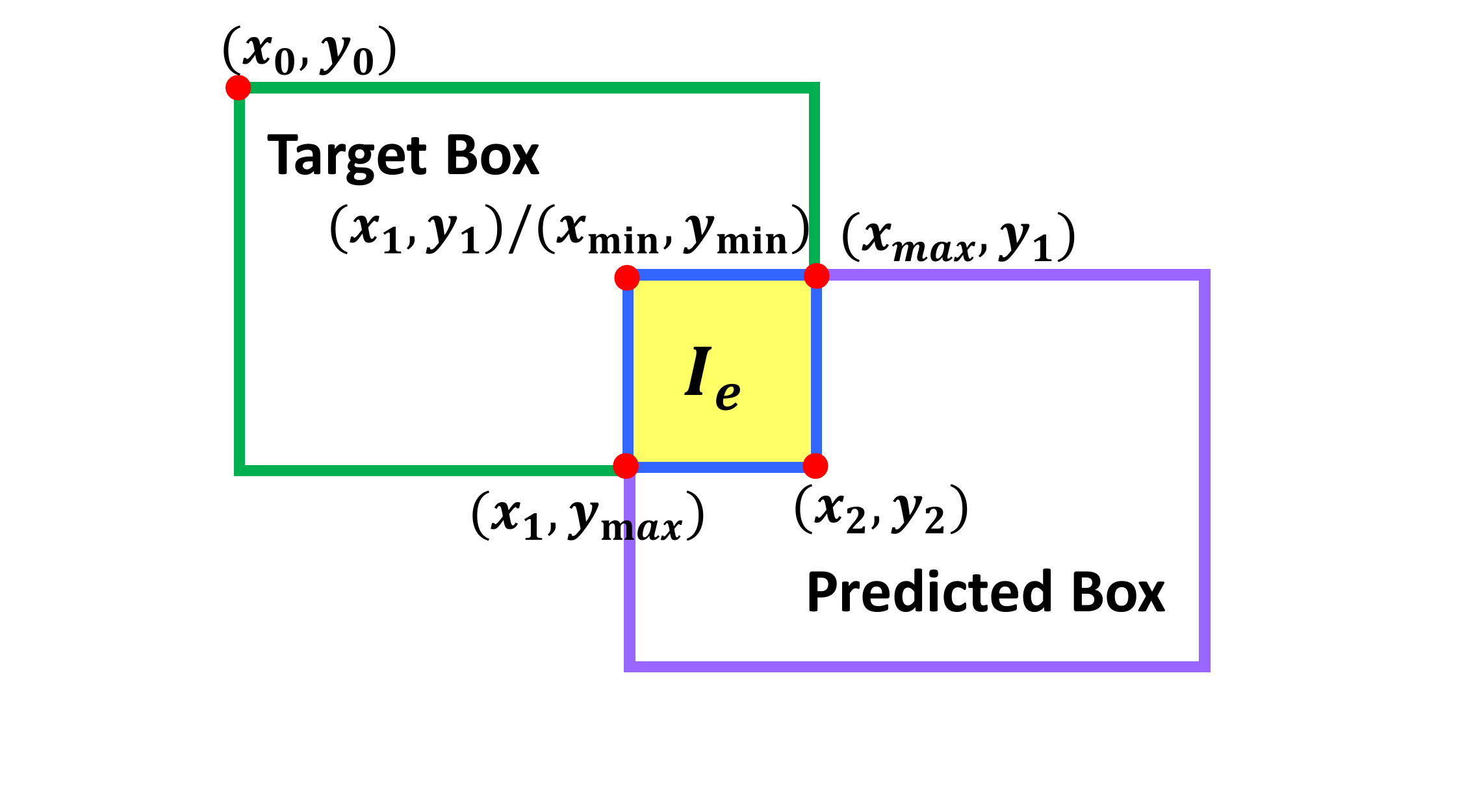}}
    \vspace{-10pt}
    \centerline{ \tiny{(a) Overlapping ($x_1 < x_2$, $y_1<y_2$)}}
    \centerline{ \tiny{$IoU=EIoU=\frac{1}{11}$}}
    \end{minipage}
    \begin{minipage}{0.45\linewidth}
    \centerline{\includegraphics[width=1\linewidth]{./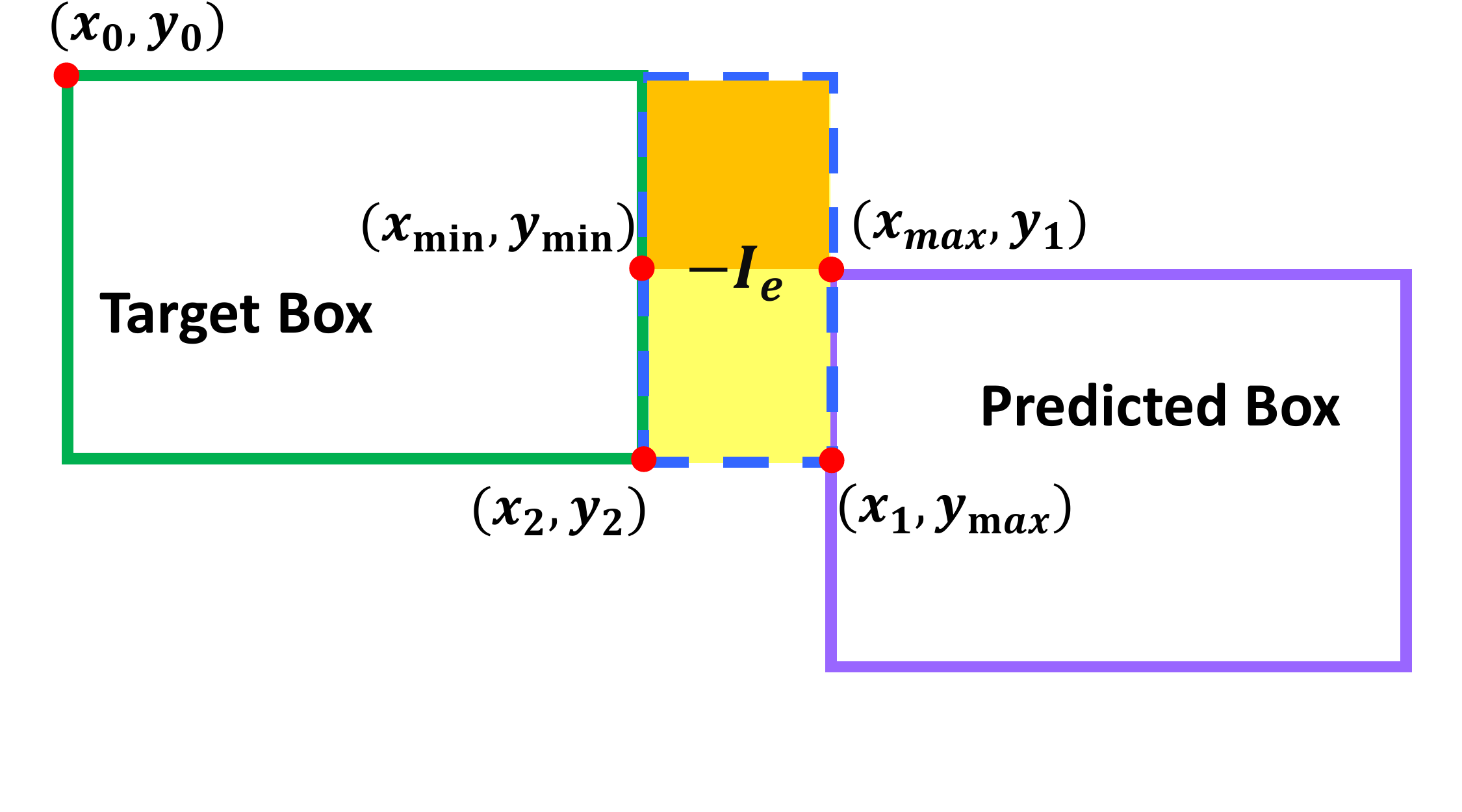}}
    \vspace{-10pt}
    \centerline{ \tiny{(b) Non-Overlapping ($x_1 > x_2$, $y_1<y_2$)}}
    \centerline{ \tiny{$IoU=0$, $EIoU=-\frac{1}{5}$}}
    \end{minipage}
    \\ \vspace{10pt}

    \begin{minipage}{0.45\linewidth}
    \centerline{\includegraphics[width=1\linewidth]{./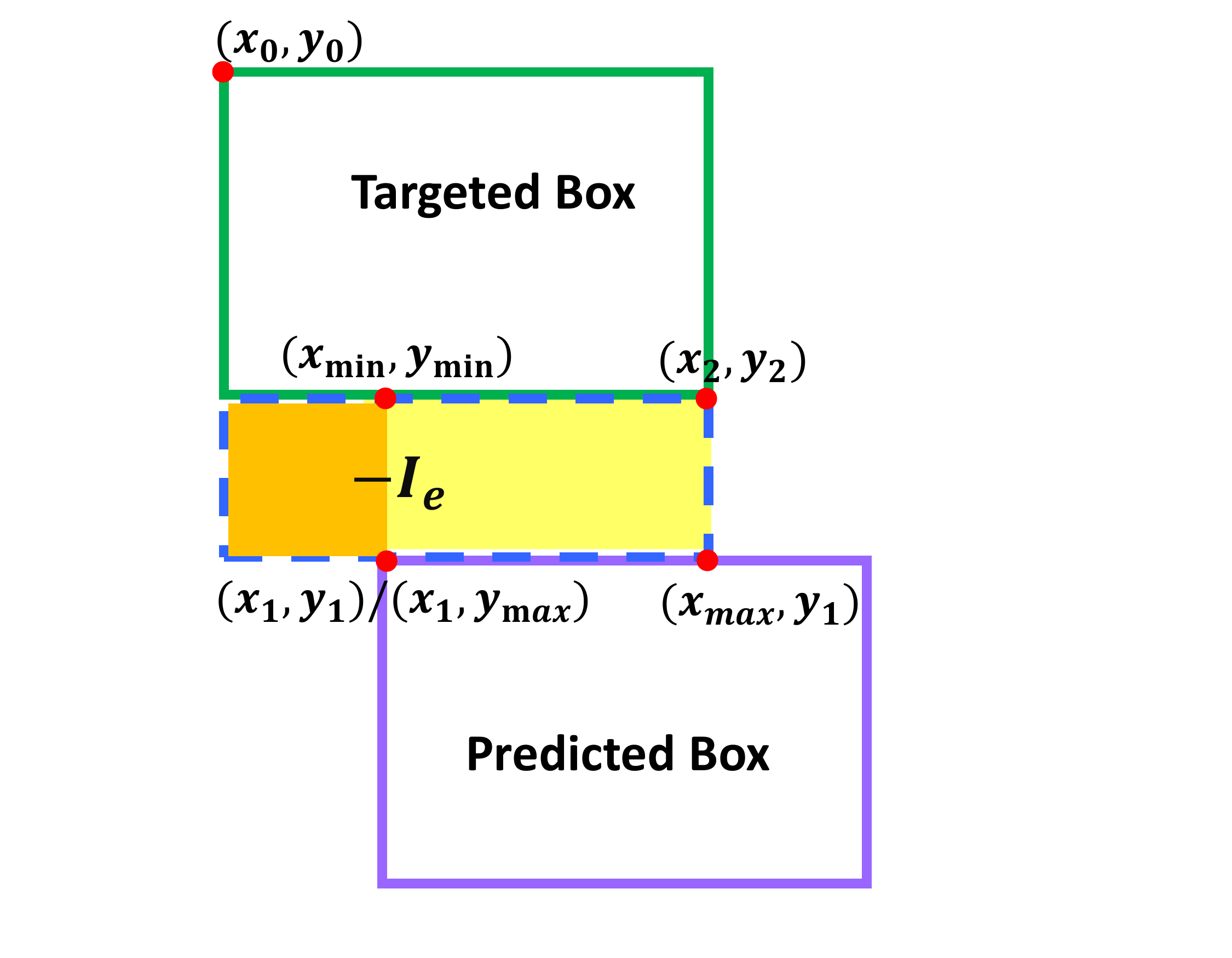}}
    \vspace{-10pt}
    \centerline{ \tiny{(c) Non-Overlapping ($x_1 < x_2$, $y_1>y_2$)}}
    \centerline{ \tiny{$IoU=0$, $EIoU=-\frac{1}{4}$}}
    \end{minipage}
    \begin{minipage}{0.45\linewidth}
    \centerline{\includegraphics[width=1\linewidth]{./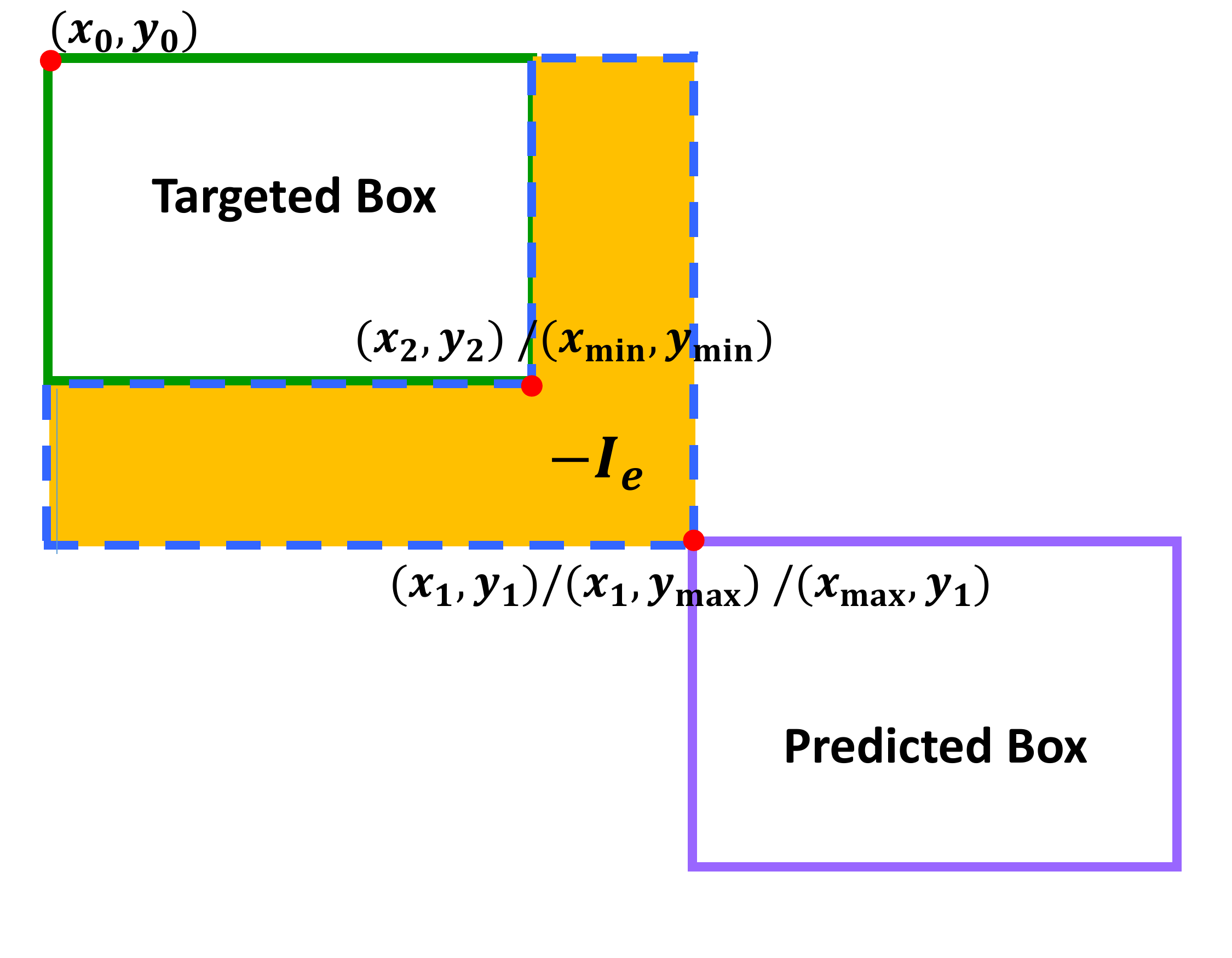}}
    \vspace{-10pt}
    \centerline{ \tiny{(d) Non-Overlapping ($x_1 > x_2$, $y_1>y_2$)}}
    \centerline{ \tiny{$IoU=0$, $EIoU=-\frac{5}{11}$}}
    \end{minipage}

    \caption{\small{ Illustration the difference between $EIoU$ and $SIoU$. It is known $IoU = \frac{I}{S_t + S_p - I} $ and $S_t$ and $S_p$ are fixed, so the differences between $I_e$ and $I_{std}$ are the key.   From Eq.(\ref{Eq.11}-\ref{Eq.17}), we know $I_{\rm{e}} = S_1 + S_2 - S_3 -S_4$, where $S_1$ is  area of the rectangle with top-left corner $(x_0,y_0)$ and bottom-right $(x_2, y_2)$; $S_2$ is area of the rectangle with top-left corner $(x_0,y_0)$ and bottom-right $(x_{\min}, y_{\min})$; $S_3$ is area of the rectangle with top-left corner $(x_0,y_0)$ and bottom-right $(x_1,y_{\max})$; $S_4$ is area of the rectangle with top-left corner $(x_0,y_{0})$ and bottom-right $(x_{\max}, y_{1})$. It is known the standard $I_{\rm{std}} = S_0$ , where $S_0$ is the area of the rectangle with top-left corner $(x_1,y_{1})$ and bottom-right $(x_{2}, y_{2})$. Thus,  when two boxes are overlapping as shown in (a) with $x_1 < x_2$ and $y_1<y_2$,  $I_{\rm{e}}$ is always positive and exactly equivalent to the standard  $I_{\rm{std}}$.  When two boxes are not overlapping, there are three situations shown in (b) with $x_1 > x_2$ and $y_1<y_2$, (c) with $x_1 < x_2$ and $y_1>y_2$ and (d) with $x_1 > x_2$ and $y_1>y_2$. In this case,  $I_{\rm{e}}$ become \emph{negative}. Moreover, unlike $I_{\rm{std}}$ that keeps \emph{constant $0$},  the further two boxes are mutually separated, the smaller the value of $I_{\rm{e}}$  is, which conforms to human's intuition better and make gradients of $I_{\rm{e}}$ \emph{non-zero}. Note that light yellow regions for $I_{\rm{e}}$ in (a), (b) and (c) are one-fold areas and deep yellow region in (b), (c) and (d) are two-fold areas. } }
    \label{Fig.2}
    \vspace{-12pt}
\end{figure}

We enumerate all the four situations whether two boxes are overlapping or not overlapping for the proposed $I_{\rm e}$ in the following.

(i) As shown in \emph{Fig \ref{Fig.2}(a)}, when two boxes are overlapping with $x_1 < x_2$ and $y_1 < y_2$ ,  we have $x_{\min} = x_1$, $x_{\max} = x_2$, $y_{\min} = y_1$ and $y_{\max} = y_2$, and then
\begin{equation}
\begin{aligned}
I_e = &(x_{\max} - x_0)(y_{\max} - y_0) + (x_{\min} - x_0)(y_{\min} - y_0)  \\
&- (x_{\min} - x_0)(y_{\max} - y_0) - (x_{\max} - x_0)(y_{\min} - y_0) \\
=&(x_{\max} - x_{\min})(y_{\max} - y_{\min}) \\
=&(x_{2} - x_{1})(y_{2} - y_{1}) \\
>& 0.
\end{aligned}
\label{Eq.19}
\end{equation}

(ii)  As shown in \emph{Fig \ref{Fig.2}(b)}, when two boxes are non-overlapping with $x_1 > x_2$ and $y_1 < y_2$,  we have $x_{\min} = x_2$, $x_{\max} = x_1$, $y_{\min} = y_1$ and $y_{\max} = y_2$, and then
\begin{equation}
\begin{aligned}
 I_e = &(x_{\min} - x_0)(y_{\max} - y_0) + (x_{\min} - x_0)(y_{\min} - y_0)  \\
&- (x_{\max} - x_0)(y_{\max} - y_0) - (x_{\max} - x_0)(y_{\min} - y_0) \\
=&(x_{\min} - x_{\max})(y_{\max} - y_{0}) + (x_{\min} - x_{\max})(y_{\min} - y_{0})\\
<& 0.
\end{aligned}
\end{equation}

(iii)  As shown in \emph{Fig \ref{Fig.2}(c)}, when two boxes are non-overlapping with $x_1 < x_2$ and $y_1 > y_2$,  we have $x_{\min} = x_1$, $x_{\max} = x_2$, $y_{\min} = y_2$ and $y_{\max} = y_1$, and then
\begin{equation}
\begin{aligned}
 I_e = &(x_{\max} - x_0)(y_{\min} - y_0) + (x_{\min} - x_0)(y_{\min} - y_0)  \\
&- (x_{\min} - x_0)(y_{\max} - y_0) - (x_{\max} - x_0)(y_{\max} - y_0) \\
=&(x_{\max} - x_{0})(y_{\min} - y_{\max}) + (x_{\min} - x_{0})(y_{\min} - y_{\max})\\
<& 0.
\end{aligned}
\end{equation}

(iv)  As shown in \emph{Fig \ref{Fig.2}(d)}, when two boxes are non-overlapping with $x_1 > x_2$ and $y_1 > y_2$,  we have $x_{\min} = x_2$, $x_{\max} = x_1$, $y_{\min} = y_2$ and $y_{\max} = y_1$, and then
\begin{equation}
\begin{aligned}
 I_e = &(x_{\min} - x_0)(y_{\min} - y_0) + (x_{\min} - x_0)(y_{\min} - y_0)  \\
&- (x_{\max} - x_0)(y_{\max} - y_0) - (x_{\max} - x_0)(y_{\max} - y_0) \\
=&2\left((x_{\min} - x_{0})(y_{\min} - y_0) - (x_{\max} - x_{0})(y_{\max} - y_0)\right)\\
<& 0.
\end{aligned}
\label{Eq.20}
\end{equation}

Therefore, $I_e$ is positive and reduced to $I_{\rm std}$ in the case of overlapping and $I_e$ is negative and decreases with the distance of two boxes in the case of non-overlapping.

Analogous to {$SIoU$} in Eq.(\ref{Eq.6}-\ref{Eq.9}), we obtain  $EIoU$ based on  $I_{\rm e}$ in Eq. (\ref{Eq.17}):
\begin{align}
 &U_e = S_t + S_p - I_e,      \label{Eq.21}           \\
 &EIoU = \frac{I_e}{U_e},
 \label{Eq.21_1}
\end{align}

$U_{\rm e}$ is a function of $I_{\rm e}$ and always larger than zero, so characteristics of $EIoU$ are similar to that of $I_{\rm e}$, which  are summarized as follows:
\begin{itemize}
  \item When two boxes are attached,  $EIoU$  \emph{exactly equivalent} to the standard ${IoU}$ and always larger than zero.
  \item When the two boxes are detached, $EIoU$ is smaller than zero and decreases with  distance of  two boxes, so that gradient descent algorithms can be employed to train the predicted box to approach the targeted box until matched.
\end{itemize}

\noindent \textbf{Differences From GIoU.} Both  $GIoU$ \cite{GIOU_2019} and the proposed $EIoU$ aim to address the problem of zero gradients when two boxes do not overlap, but there are still some significant distinctions between them.
As shown in Algorithm 1,  $GIoU$  adds an extra term after  ${SIoU}$, which can be considered as a regularization metric. The new term indeed makes \emph{GIoU} have non-zero gradients when two boxes are detached, but it also leads $GIoU$ to be not equivalent to ${SIoU}$ any more when two boxes are attached. This change will cause new problems. \emph{First},  it brings some counter-intuitive and unreasonable cases, and one example is visually illustrated in Fig \ref{Fig.3}. \emph{Second}, the performance of $GIoU$ might be suboptimal as \emph{SIoU} is the final evaluation metric. As for $EIoU$ is not a regularization method and an incremental modification of $GIoU$. We fundamentally address the root of the problem by redefining ${IoU}$, so that it is trainable in the case of non-overlapping and  equivalent(reduced) to  ${SIoU}$ in the case of overlapping. Accordingly, $EIoU$ will never encounter similar plights shown in Fig \ref{Fig.3}.

\renewcommand\arraystretch{1.0}
\begin{table}[!hptb]
\small
\begin{threeparttable}
\centering
    \begin{tabular}{p{8cm}l}
        \hline
        \textbf{Algorithm 1}: $GIoU$ in \cite{GIOU_2019} \\
        \hline
       \textbf{Input:} Two arbitrary bounding boxes: $A$ and $B$  \\
        \textbf{Output:} $GIoU$  \\
         \textbf{1.}  Find the smallest bounding box $C$ that encloses $A$ and $B$\\
         \textbf{2.} Compute the standard IoU: $SIoU = \frac{A\bigcap B}{A\bigcup B}$ \\
         \textbf{3.} Compute \emph{GIoU}:  $GIoU = SIoU -\frac{C \setminus \left(A\bigcup B\right)}{C}$ \\
        \hline
    \end{tabular}
\end{threeparttable}
\end{table}

\begin{figure}[htbp]
    \centering
    \vspace{-10pt}
    \subfigure[]{\includegraphics[width=0.5\linewidth]{./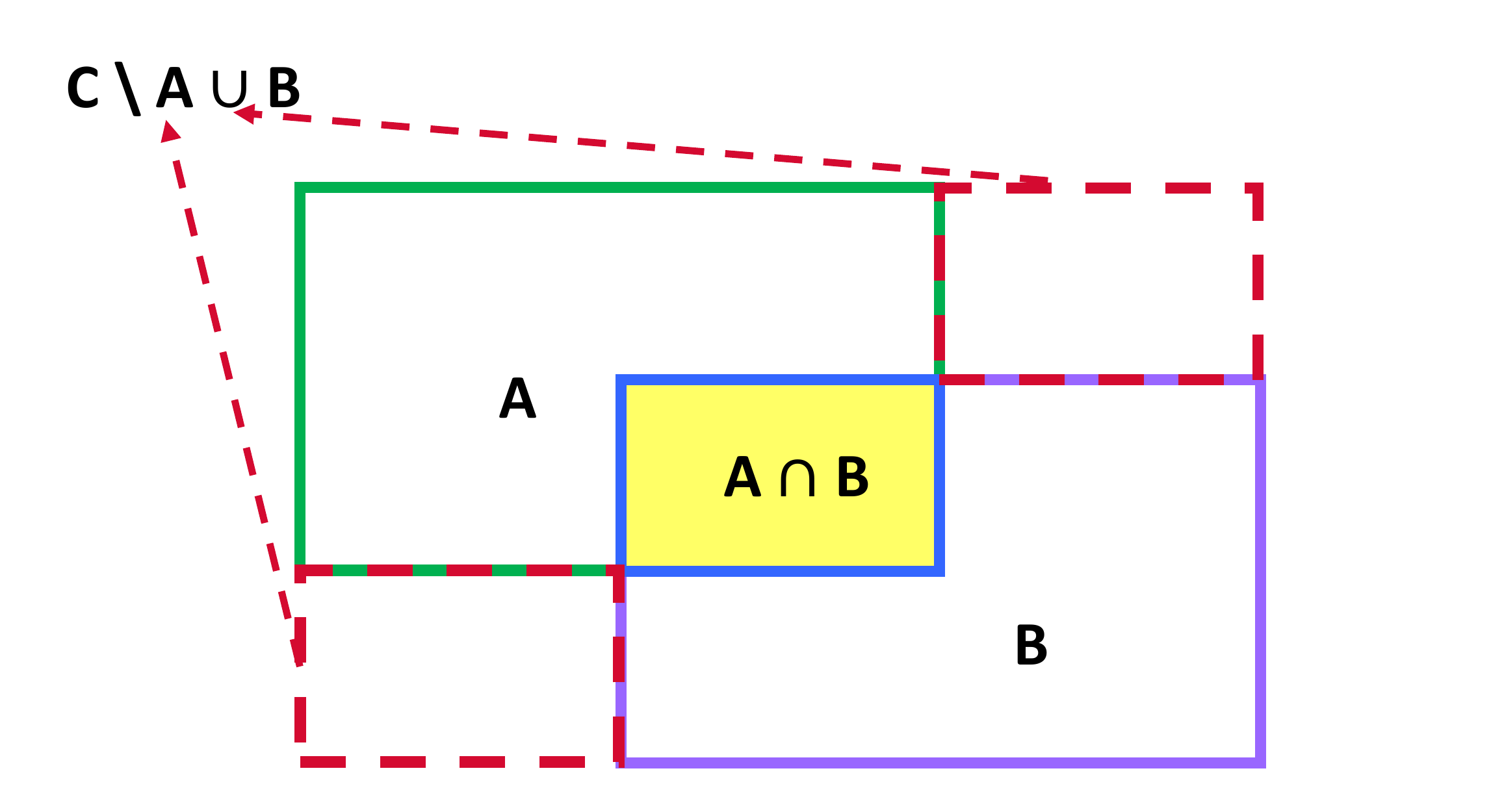}}\hspace{-5pt}
    \subfigure[]{\includegraphics[width=0.5\linewidth]{./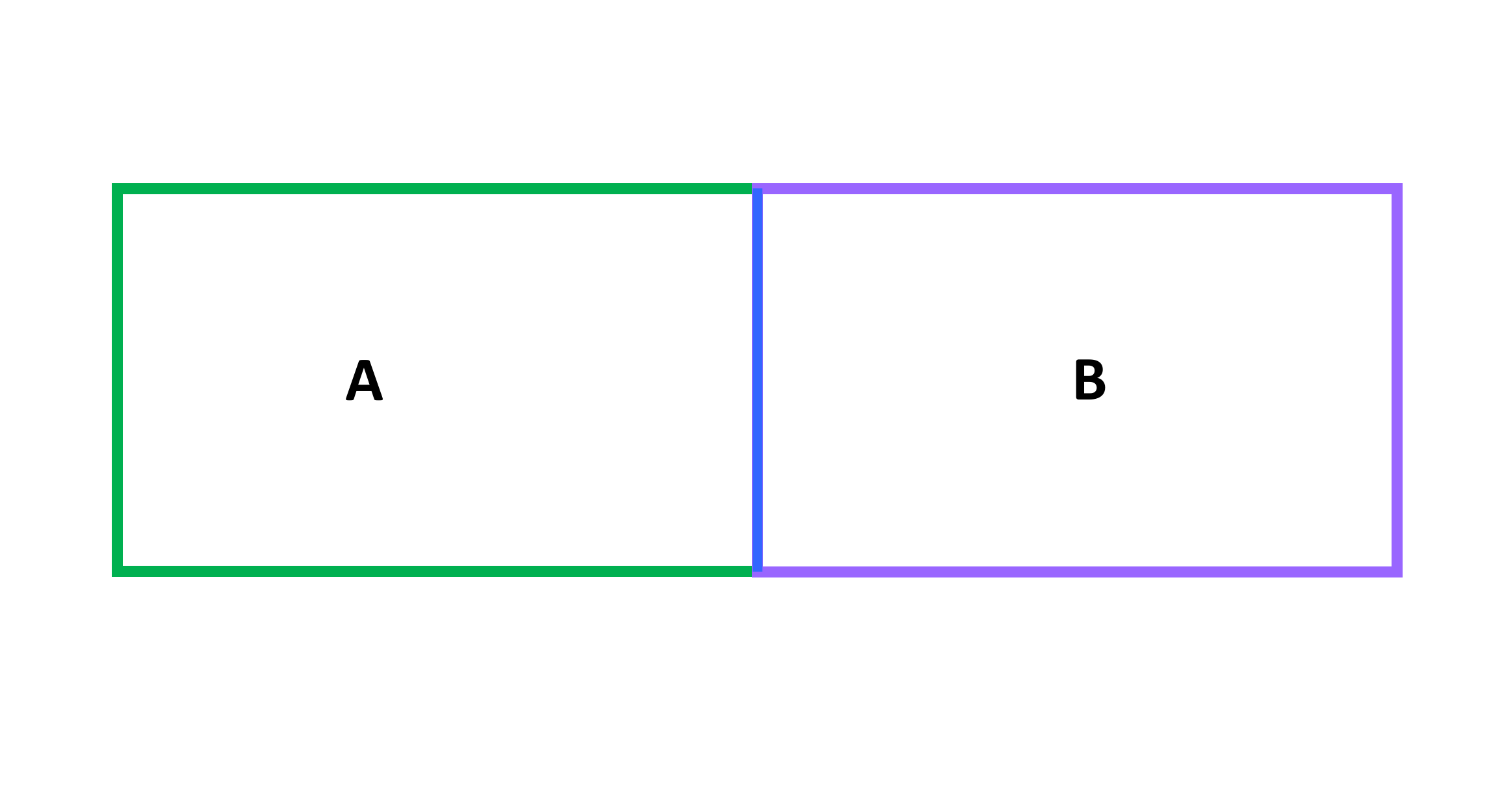}}
    \hspace{-10pt}
    {\caption{\small{ A \emph {counter-intuitive} case for $GIoU$. (a) We first compute the standard IoU: $SIoU = \frac{A\bigcap B}{A\bigcup B}=\frac{0.25}{2-0.25} = \frac{1}{7}$, and then compute $GIoU = SIoU -\frac{C \setminus \left(A\bigcup B\right)}{C}=\frac{1}{7} - \frac{0.5}{2.25} = -\frac{5}{63} $, while (b) two boxes are just attached, $ GIoU = SIoU = 0$. The value of $GIoU$ in (a) is  \emph{smaller} than that in (b), which is inconsistent with the fact that two boxes match better in (a) than in (b).}}
    \label{Figure.3}}
\end{figure}

\subsection{Covexification Technique (CT)}

\begin{figure*}[ht]
    \centering
    \subfigure[$- EIoU$ Loss]{\includegraphics[width=0.3\linewidth]{./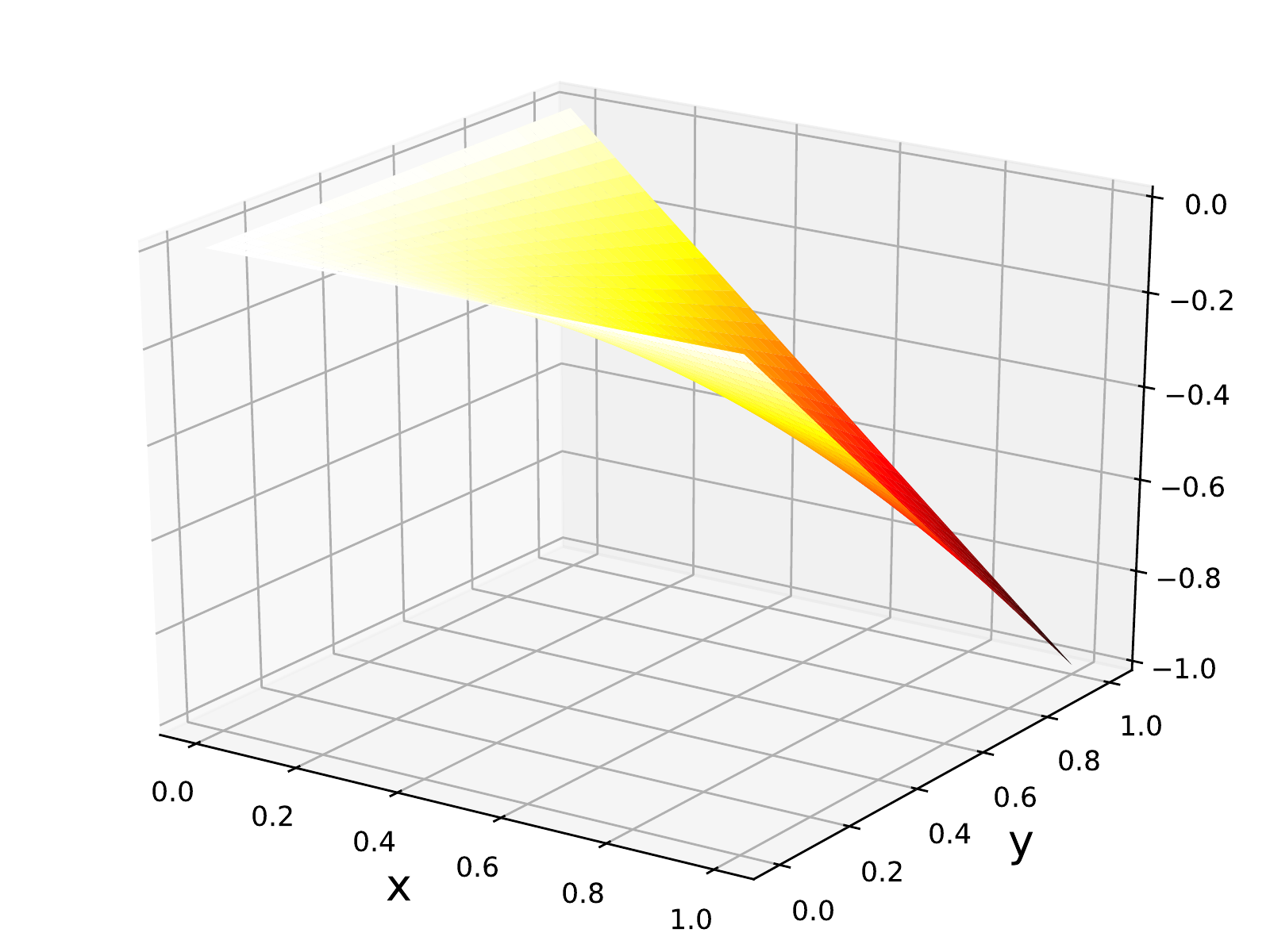}}
    \subfigure[Smooth-$EIoU$ Loss]{\includegraphics[width=0.3\linewidth]{./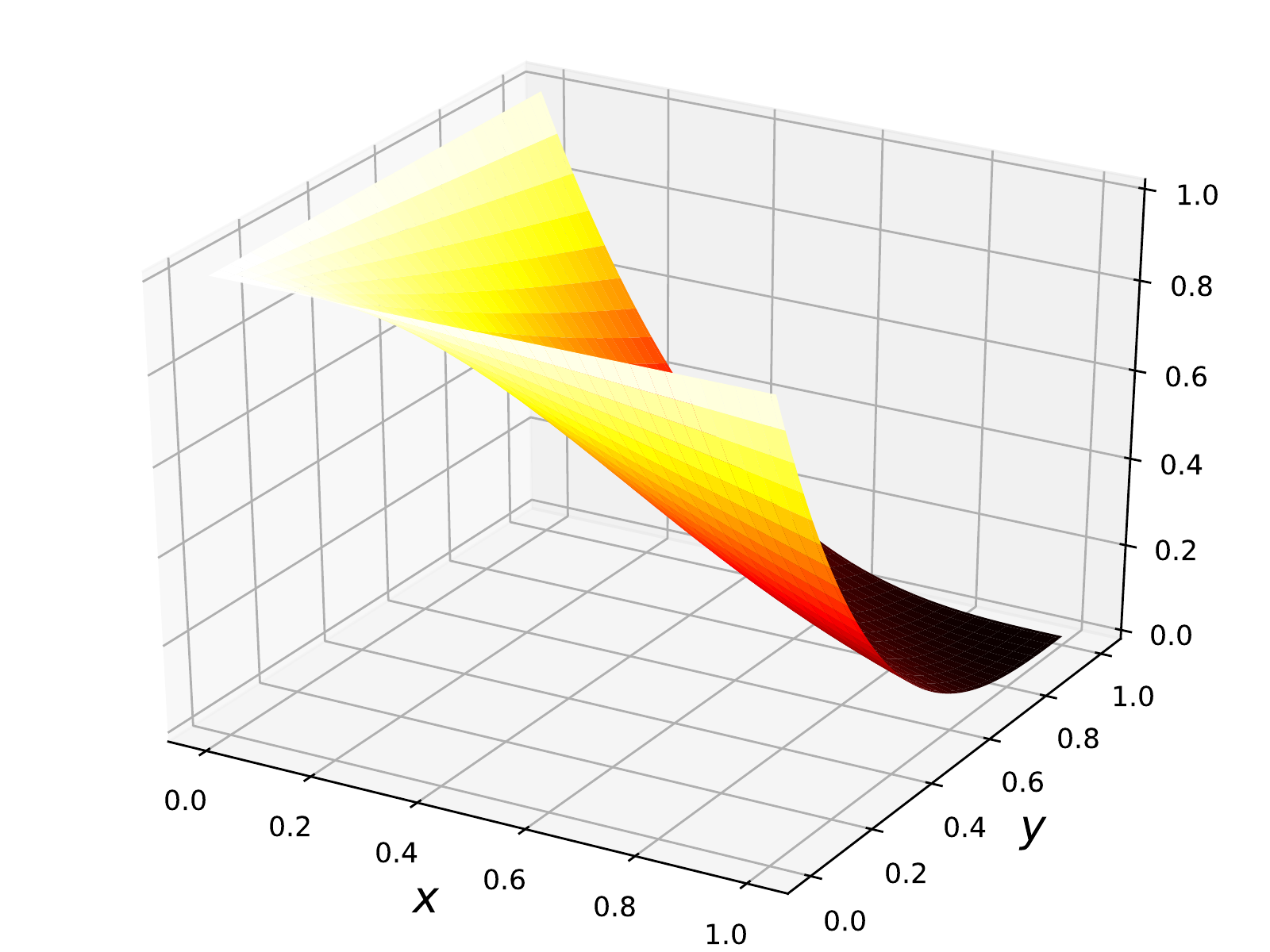}}
    \subfigure[Convergence Behaviors ]{\includegraphics[width=0.28\linewidth]{./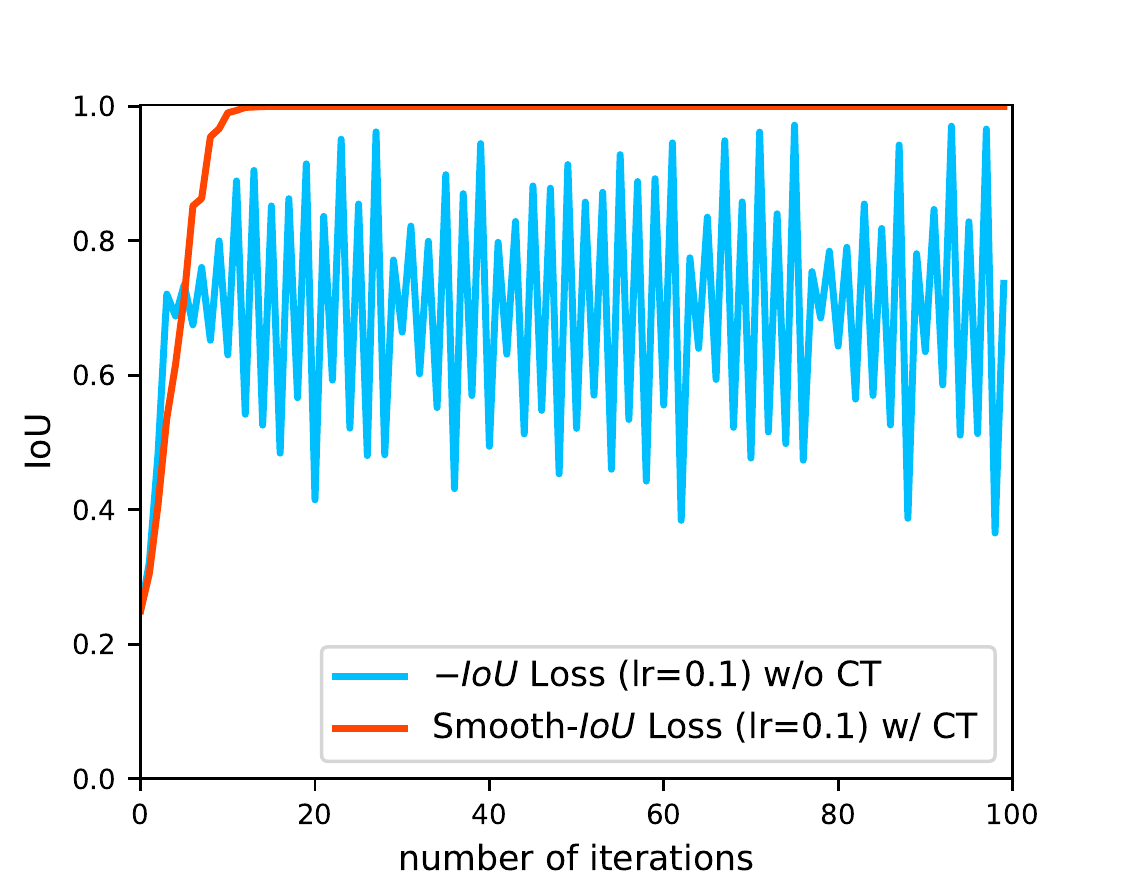}}
    \caption{ \small{Visualization of \small{$- EIoU$} and Smooth-$EIoU$ Loss constructed from \small{$- EIoU$} with CT in Eq. (\ref{Eq.22}) when the targeted box is fixed with $(0, 0, 1, 1)$ and the predicted box varies with $(0, 0, x, y)$. (a) shows the value space of \small{$-EIoU$}.  \small{$- EIoU$} is smaller than zero and not smooth in the neighbourhood of the minimum, and its gradient at the minimum is \emph{non-zero} ;  (b) shows the value space of Smooth-$EIoU$ Loss. After employing CT, Smooth-$EIoU$ Loss is lager than zeros and smooth everywhere, and its gradients  are gradually close to zero when it approaches the minimum. (c) Convergence behaviours of \small{$-EIoU$} and Smooth-$EIoU$ Loss when the targeted box is fixed with $(0, 0, 1, 1)$ and the predicted box is initialized with $(0, 0, 0.5, 0.5)$. Due to the non-zero gradients of \small{$- EIoU$} at the minimum and non-smoothness in its neighbourhood, \small{$-EIoU$} Loss \emph{oscillates} dramatically and cannot converge. In contrast, {Smooth-$EIoU$ Loss quickly converges} to the optimum.} }
    \label{Fig.3}
    \vspace{-1em}
\end{figure*}

Loosely speaking, any a decreasing function w.r.t. $IoU$ can be treated as a localization loss, such as $\frac{1}{IoU}$, {$-IoU$} and $-\ln(IoU)$, {but there are two problems in these simple $IoU$ based losses. \emph{First}, they are not ensured to be always non-negative. \emph{Second}, the gradients of them at the minimum are not zero. } It is well known that (stochastic) gradient methods ideally achieve a minimal point of which the gradient must be zero. Thus,  theoretically, it cannot achieve the minimum if we use these losses in train. To make matters worse, non-zero gradients at the minimum are more likely to make the training process oscillating/non-convergent and even collapsed in practice.
To tackle these problems, we present the convexification technique (CT) to modify the loss and make it practical during the training. It needs two steps:
\begin{enumerate}[(i)]
\item {Add the opposite number of the minimum of the original loss.}
\item {Square the sum above .}
\end{enumerate}

Adopting CT,  any a decreasing functions w.r.t. $IoU$ will become  a well-defined loss, so that it is always non-negative and the gradient at the minimum is zero. Note that  {CT is general, which can be employed to modify any loss not limited the localization loss and make it possess appealing characteristics.}  In this paper, we present a new loss based on the simplest decreasing function {$-EIoU$} w.r.t. $EIoU$.  The minimal value of  $-EIoU$ is $-1$,  so the loss is obtained through CT as follows\footnote{For a more general method, the power order is not limited to 2 but can be any number more than 1, such as ${\cal L}_{\rm Smooth\textrm{-}EIoU} = \left(1- \frac{I_e}{U_e}\right)^{1.5}$.}, \emph{i.e.},
\begin{align}
  {\cal L}_{\rm Smooth\textrm{-}EIoU} = \left(1- \frac{I_e}{U_e}\right)^2
\label{Eq.22}
\end{align}

 CT can smooth  loss functions, so the new loss is referred to as Smooth-$EIoU$ Loss. The new loss  is  also like Focal Loss. CT  leads Smooth-$EIoU$ Loss to possess focal capability that down-weights the gradient of well-localized predicted boxes,  \emph{i.e.},
\begin{equation}
\vspace{-0.6em}
\frac{\partial{\cal L}_{\rm Smooth\textrm{-}EIoU}}{\partial z} = -\left(1- \frac{I_e}{U_e}\right)\frac{\partial\left(\frac{I_e}{U_e}\right)}{\partial z}
\label{Eq:20}
\end{equation}
\noindent where $z$ is any one of $\{x_1^p, y_1^p, x_2^p, y_2^p\}$. It is known $EIoU$ between a well-localized box and the ground-truth box is close to $1$, and then $\left(1- \frac{I_e}{U_e}\right)$ will be close to $0$. Thus, $\frac{\partial{\cal L}_{\rm Smooth\textrm{-}EIoU}}{\partial z}$ will also become very small, which means Smooth-{${EIoU}$} Loss will down-weight easy pair boxes and pay more attention to hard pair boxes in train.

The following example illustrates the importance of CT. Given the targeted bounding box with a tuple $(0, 0, 1, 1)$ and the predicted bounding box with a tuple $(0, 0, x, y)$, the value space of {$-EIoU$} \  and  Smooth-$EIoU$ Loss constructed from {$-EIoU$} with CT are shown in  \ref{Fig.3} (a)-(b).  Smooth-$EIoU$ Loss becomes smooth after employing CT, and then the gradients of the loss are gradually close to zero when approaching minimum, so CT makes Smooth-$EIoU$ Loss  achieves the minimum when applying a gradient descent algorithm, which can be  observed in  \ref{Fig.3} (c).   \ref{Fig.3} (c) shows  the convergence behavior of  $-EIoU$   and Smooth-$EIoU$ Loss when the predicted bounding box starts with the initial value $(0, 0, 0.5, 0.5)$. Not surprisingly, $-EIoU$  oscillates severely and there is no tendency to be converged. In contrast, Smooth-$EIoU$ Loss quickly and smoothly converges to the optimum. Notably,  the steady optimization technique (SOT) that we will elaborate in the next subsection is adopted for $-EIoU$  and Smooth-$EIoU$ Loss in this experiment.

\subsection{Steady Optimization Technique (SOT) }
 For simplicity, we only deduce the partial derivative of Smooth-$EIoU$ Loss in Eq. (\ref{Eq.22}) w.r.t. $x_1^p$ here, and others are similar and presented in the appendix. We first compute the  gradient of $I_e$ w.r.t. $x_1^p$, $i.e.,$
\begin{equation}
\frac{\partial I_e}{\partial x_1^p} = \left\{
 \begin{aligned}
    &y_{\rm min}- y_{\rm max},  ~~~~~~~~{\rm if} ~ x_1^p \ge x_1^t ~{\rm and} ~ x_1 \le x_2, \\
    &2y_0 - y_{\rm max} - y_1,  ~~{\rm if}~  x_1^p \ge x_1^t ~{\rm and}~  x_1 > x_2,  \\
    &0, ~~~~~~~~~~~~~~~~~~~~~~~~~~~\;{\rm if}~ x_1^p < x_1^t.
 \end{aligned}
 \right .
 \label{Eq.23}
\end{equation}

And then we compute the  gradient of $U_e$ w.r.t. $x_1^p$
\begin{equation}
\frac{\partial U_e}{\partial x_1^p} = (y_1^p - y_2^p) - \frac{\partial I_e}{\partial x_1^p}.
\label{Eq.24}
\end{equation}

Finally we obtain the  gradient of  Smooth-$EIoU$ Loss w.r.t. $x_1^p$
\begin{equation}
  \frac{{\partial \cal} L_{\rm{Smooth}\textrm{-}\rm{EIoU}}}{\partial x_1^p} = 2\left(1- \frac{I_e}{U_e}\right)\frac{I_e\frac{\partial U_e}{\partial x_1^p} - \frac{\partial I_e}{\partial x_1^p}U_e  }{U_e^2}.
\label{Eq.25}
\end{equation}

The  partial derivative of Smooth-$EIoU$ Loss w.r.t. $y_1^p$, $x_2^p$ and $y_2^p$ are similar to  Eq. (\ref{Eq.25}) (details please see Appendix.A).  From Eq. (\ref{Eq.17}), Eq. (\ref{Eq.21}), and Eq. (\ref{Eq.24}) ,  we know $I_{\rm e} \propto s$, $U_{\rm e} \propto s^2$, $\frac{\partial I_e}{\partial x_1^p} \propto s$ and  $\frac{\partial U_e}{\partial x_1^p} \propto s$ where $s$ is the size (height or width) of the predicted box $(x_1^p, y_1^p, x_2^p, y_2^p)$. Hence, we analyze   Eq. (\ref{Eq.25}) and find $\frac{{\partial \cal} L_{\rm{Smooth}\textrm{-}\rm{EIoU}}}{\partial x_1^p} \propto \frac{1}{s}$, which means the gradient of  Smooth-$EIoU$ Loss w.r.t. $x_1^p$ is inverse proportional to the size of the predicted box. This inverse proportion will make the loss difficult to converge in train, since when the size of the predict box is large, it means the absolute difference between the targeted box and the predicted box is also large, and then it needs to update with a relatively large step, but if applying gradient in  Eq. (\ref{Eq.25}) to update the variables, the update is small instead. When the size of the boxes is small, it will encounter a similar dilemma. A  good iteratively update for variables should be proportional to the size, just like the gradients of $\ell_2$ Loss. To achieve this goal, we change the update rule for variables of $EIoU$. We take $x_1^p$ for example, \emph{i.e.},
\begin{equation}
\begin{aligned}
\vspace{-5em}
  x_{1_k}^p &=  x_{1_{k-1}}^p - 2\alpha \frac{{\partial \cal} L_{\rm{GI}\textrm{-}\rm{IoU}}}{\partial x_1^p}U_e \vspace{-2em}\\
  &= x_{1_{k-1}}^p -2\alpha\left(1- \frac{I_e}{U_e}\right)\frac{I_e\frac{\partial U_e}{\partial x_1^p} - \frac{\partial I_e}{\partial x_1^p}U_e  }{U_e}
\end{aligned}
\vspace{-0.5em}
\label{Eq.26}
\end{equation}
\noindent where $k$ is the number of iterations and $\alpha$ is the learning rate. Compared with Eq. (\ref{Eq.25}), Eq. (\ref{Eq.26}) multiplies $U_e$ to make sure the new gradient update is proportional to the scale of the boxes.

We call this method  as the steady optimization technique (SOT). This technique seems to be heuristic, but we will theoretically prove it reasonable in the following.

\begin{theorem}
If the gradient of $f(x)$, denoted as $\nabla f(x)$, is Lipschitz continuous, i.e.,
\begin{equation}
\Vert \nabla f(x_1) - \nabla f(x_2)\Vert \le L \Vert x_1 - x_2\Vert_2,
\label{Eq.27}
\end{equation}
 the function $g(x)$ is positive and bounded, i.e., $0 < g(x) \le M$, and the learning rate satisfies $\alpha < \frac{1}{LM}$,  the  update update rule,
\begin{equation}
x_{k+1} = x_k - \alpha g(x_k) \nabla f(x_k),
\label{Eq.28}
\end{equation}
will make $f(x)$ steadily decrease.
\label{The.1}
\end{theorem}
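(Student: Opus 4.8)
The plan is to reduce the claim to the classical \emph{descent lemma} that the Lipschitz condition Eq. (\ref{Eq.27}) supplies. First I would record the quadratic upper bound: for any $x$ and $y$,
\[
f(y) \le f(x) + \langle \nabla f(x),\, y - x\rangle + \frac{L}{2}\Vert y - x\Vert_2^2,
\]
which follows from Eq. (\ref{Eq.27}) by writing $f(y) - f(x) = \int_0^1 \langle \nabla f(x + t(y-x)),\, y-x\rangle\, dt$ and bounding the integrand's deviation from $\langle \nabla f(x),\, y-x\rangle$ with the Lipschitz estimate. This inequality is the only nontrivial ingredient, and it holds for every iterate regardless of whether $f$ is convex.

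Next I would specialize $x = x_k$ and $y = x_{k+1}$, where the update Eq. (\ref{Eq.28}) gives $y - x = -\alpha g(x_k)\nabla f(x_k)$. Substituting and collecting the two resulting multiples of $\Vert \nabla f(x_k)\Vert_2^2$ yields
\[
f(x_{k+1}) \le f(x_k) - \alpha g(x_k)\left(1 - \frac{L}{2}\alpha g(x_k)\right)\Vert \nabla f(x_k)\Vert_2^2 .
\]
The whole theorem then rests on showing that the scalar coefficient multiplying $\Vert \nabla f(x_k)\Vert_2^2$ is nonnegative.

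This is the step where the hypotheses on $g$ and $\alpha$ do their work, and it is the only place that needs care. Since $0 < g(x_k) \le M$ and $\alpha < \frac{1}{LM}$, I would chain $\alpha L g(x_k) \le \alpha L M < 1$, so that $1 - \frac{L}{2}\alpha g(x_k) > \frac{1}{2} > 0$; together with $\alpha g(x_k) > 0$ this makes the subtracted term nonnegative. Hence $f(x_{k+1}) \le f(x_k)$, with strict inequality whenever $\nabla f(x_k) \neq 0$, which is exactly the steady (monotone) decrease asserted.

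I expect no serious obstacle beyond this bookkeeping; the conceptual crux is merely recognizing that the extra scaling factor $g(x_k)$ could otherwise inflate the effective step $\alpha g(x_k)$ and destroy descent, and that the boundedness $g \le M$ paired with the matched rate $\alpha < \frac{1}{LM}$ is precisely what keeps the effective step inside the safe regime $\alpha g(x_k) < \frac{1}{L}$. Finally I would connect this back to SOT by identifying $f = {\cal L}_{\rm Smooth\textrm{-}EIoU}$ and $g(x_k) = U_e$ (up to the constant absorbed into $\alpha$), so the theorem certifies that the SOT update Eq. (\ref{Eq.26}) decreases the loss as long as $U_e$ stays bounded, which holds for boxes of bounded size.
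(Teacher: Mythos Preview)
Your proof is correct and follows essentially the same route as the paper: both invoke the quadratic upper bound (descent lemma) that Lipschitz continuity of $\nabla f$ supplies, and both then use $\alpha g(x_k)\le \alpha M<1/L$ to secure monotone decrease. The only cosmetic difference is that the paper first loosens the coefficient $L/2$ to $\tfrac{1}{2\alpha g(x_k)}$ and completes the square to exhibit $x_{k+1}$ as the minimizer of a surrogate $P(x;x_k)$, whereas you substitute $x_{k+1}$ directly and check the sign of the coefficient; your version is slightly more direct and also yields the explicit per-step decrease $\alpha g(x_k)\bigl(1-\tfrac{L}{2}\alpha g(x_k)\bigr)\Vert\nabla f(x_k)\Vert_2^2$.
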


We provide the proof in the appendix. {$U_e$ in our Smooth-$EIoU$ Loss is always  greater than zero. Therefore if we set the learning rate properly, SOT can ensure Smooth-$EIoU$ Loss steadily decreases. From Eq. (\ref{Eq.22}) we know the Smooth-$EIoU$ Loss  is nonnegative and bounded, hence SOT will further guarantee it to be convergent according to the bounded monotonic principle.}

\begin{figure}[ht]
    \vspace{-1em}
    \centering
    \subfigure[]{\includegraphics[width=0.5\linewidth]{./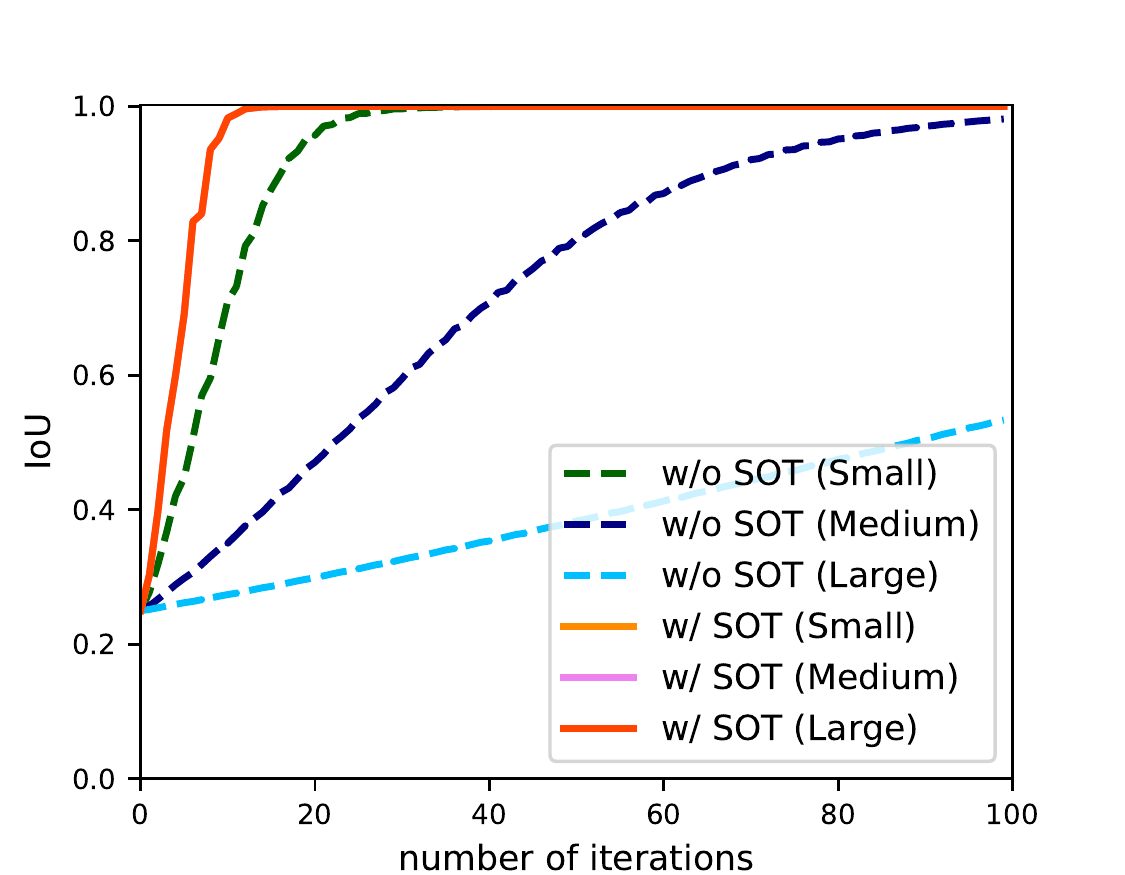}}\hspace{-5 pt}
    \subfigure[]{\includegraphics[width=0.5\linewidth]{./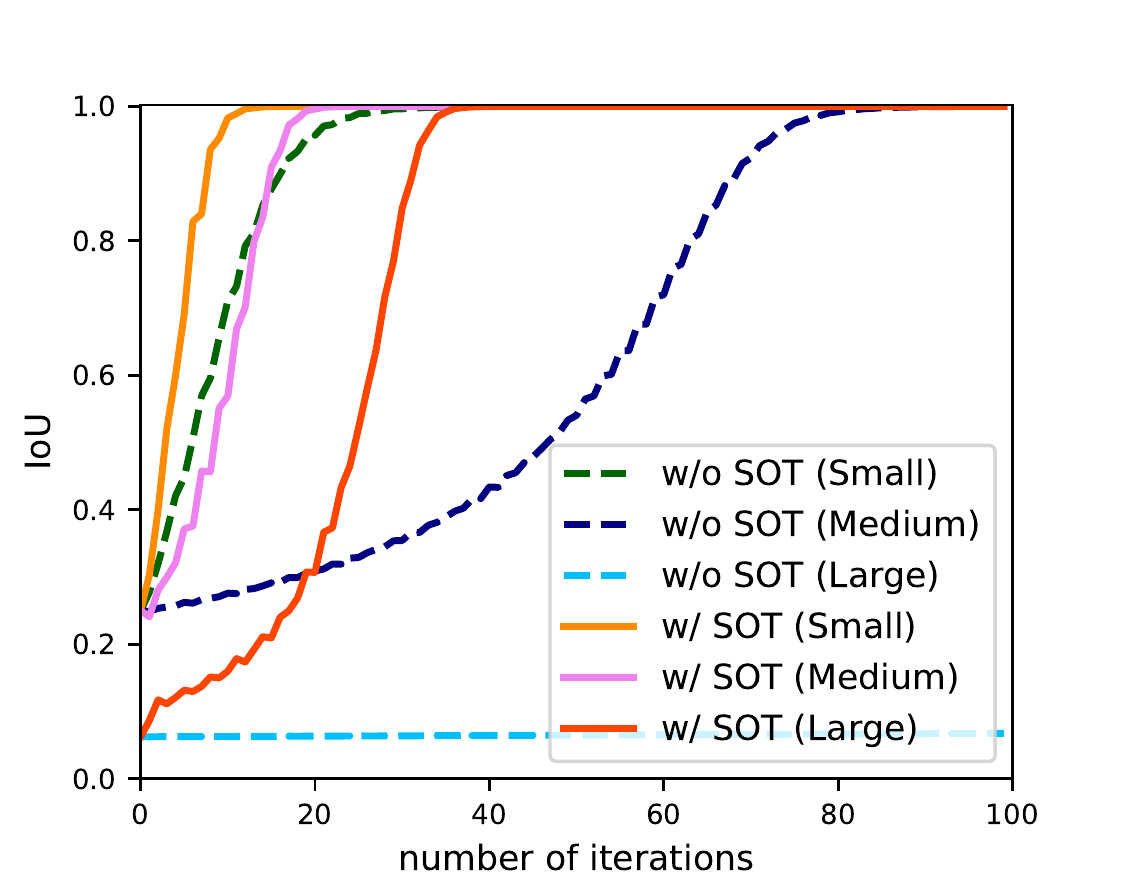}}
    \caption{ \small{ Convergence of Smooth-$EIoU$ Loss optimized with/without SOT: (a) comparisons when the size of the targeted box and predicted box proportionally varies: the targeted box are fixed with $(0, 0, 1, 1)$ (small), $(0, 0, 2, 2)$ (medium) and $(0, 0, 4, 4)$ (large), and the initial value of the predicted is proportionally set as $(0, 0, 0.5, 0.5)$ (small), $(0, 0, 1, 1)$ (medium) and $(0, 0, 2, 2)$ (large). The converges tendency of Smooth-$EIoU$ Loss with SOT is completely the same regardless of the size, while Smooth-$EIoU$ Loss {without SOT is very sensitive to the varied size.} The larger the size is,  the \emph{slower} the convergence rate is, just like what we analyzed. (b) comparison when only the size of the predicted box varies: the targeted box are fixed with $(0, 0, 1, 1)$, and the initial value of the predicted is set as $(0, 0, 0.5, 0.5)$ (small), $(0, 0, 2, 2)$ (medium) and $(0, 0, 4, 4)$ (large). Smooth-$EIoU$ Loss with {SOT still can quickly converge}, but Smooth-$EIoU$ Loss without SOT is more sensitive to the size under this circumstance. When the initial value of the predicted box set as $(0, 0, 4, 4)$, it is even \emph{trapped} and cannot move to the target box.} }
    \label{Fig.4}
    \vspace{-0.5em}
\end{figure}

According to Theorem \ref{The.1},   {SOT is very general and can be applied to optimize many types of losses for steady convergence, including but not limited to fractional losses of which its gradients is not linearly proportional to the size.}

 We design two examples in \ref{Fig.4} to further demonstrate the superiority of SOT. As shown in  \ref{Fig.4}, SOT will make the convergence of the loss steady, and it is robust to the size of the initial predicted box and the targeted box. When the size of the targeted box and predicted box proportionally varies, the targeted box are fixed with $(0, 0, 1, 1)$ (small), $(0, 0, 2, 2)$ (medium) and $(0, 0, 4, 4)$ (large), and the initial value of the predicted is proportionally set as $(0, 0, 0.5, 0.5)$ (small), $(0, 0, 1, 1)$ (medium) and $(0, 0, 2, 2)$ (large). The converges tendency of Smooth-$EIoU$ Loss with SOT is completely the same regardless of the size, while Smooth-$EIoU$ Loss {without SOT is very sensitive to the varied size.} The larger the size is,  the \emph{slower} the convergence rate is, just like what we analyzed above.  When only the size of the predicted box varies, the targeted box are fixed with $(0, 0, 1, 1)$, and the initial value of the predicted is set as $(0, 0, 0.5, 0.5)$ (small), $(0, 0, 2, 2)$ (medium) and $(0, 0, 4, 4)$ (large). Smooth-$EIoU$ Loss with {SOT still can quickly converge}, but Smooth-$EIoU$ Loss without SOT is more sensitive to the size under this circumstance. When the initial value of the predicted box set as $(0, 0, 4, 4)$, it is even \emph{trapped} and cannot move to the target box.

\subsection{IoU Head}
In \cite{IoUNet_2018} it has demonstrated that there is a misalignment between classification confidence and localization accuracy, and utilizing precisely predicted $IoU$ scores of bounding boxes to guide NMS will largely alleviate this problem. Taking advantage of the existing ground-truth $IoU$ calculated in Smooth-$EIoU$ Loss,  we add IoU Head and train it to predict accurate $IoU$ scores. It is known  $IoU$ distributes over $[0, 1]$, so we first utilize the sigmoid function to compress the predicted $IoU$ score to $[0, 1]$, and then a Kullback-Leibler (KL) divergence loss is employed in train, \emph{i.e.},
\begin{align}
&q_p(x) = {\rm Sigmoid}(x), \\
&{\cal L}_{KL} = q_g\log\frac{q_g}{q_p(x)} + (1 - q_g)\log\frac{1-q_g}{1-q_p(x)},
\end{align}
where $x$ is the output of IoU Head, $q_p(x)$ is the predicted $IoU$ score and $q_g$ is the ground-truth $IoU$ score that is generated in Smooth-$EIoU$ Loss.

Note that IoU Head is a single layer, and it shares most parameters with the classification  head and the bounding-box  head. Hence, it will increase little computational cost in train and test.

\noindent\textbf{Differences From IoU-Net.} \cite{IoUNet_2018} pioneeringly proposed  IoU-Net learning to predict $IoU$ to promote the localization accuracy. However,  there are still some significant differences between our IoU Head and the IoU-Net. \emph{Firstly}, we used a KL loss that is widely proven to be effective for deep neural networks rather than a squared loss. \emph{Secondly}, it needs to manually construct synthetical bounding-box sets to train  IoU-Net individually besides training the main branches of classification and localization, while our IoU Head can  seamlessly embed  to the existing network and be trained end-to-end. \emph{Thirdly},   IoU Head is much lighter than  IoU-Net.  IoU-Net is an individual subnet and works parallelly  with the classification subnet and the localization subnet, while  IoU Head is a single-layer branch and shares most layers with the main branches. Architectures of IoU-Net and IoU Head are visually illustrated in Fig \ref{Figure.6}. \emph{Fourthly}, the ground-truth $IoU$ used in IoU Head is generated by the localization head, so  IoU Head and  Localization Head are closely interrelated with better cooperativity, and the effect of "$1+1>2$" between them is shown in Table 1. But IoU-Net has little relation with the localization head.

\begin{figure}[htbp]
    \small
    \centering
    \subfigure[IoU-Net]{\includegraphics[width=0.8\linewidth]{./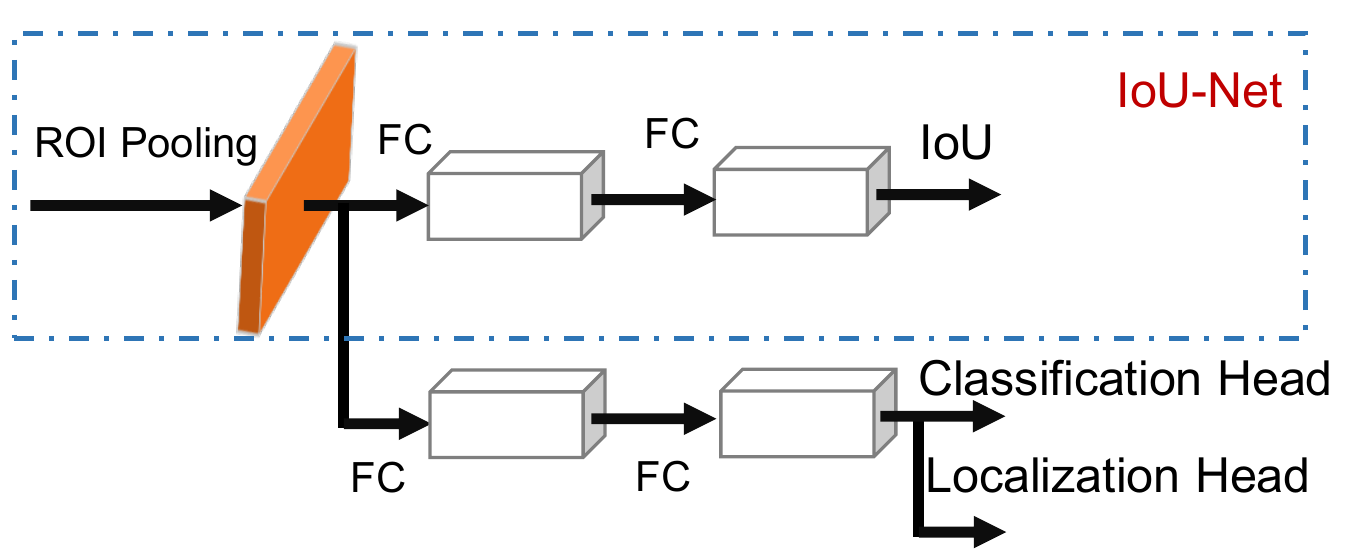}}
    \subfigure[IoU Head]{\includegraphics[width=0.8\linewidth]{./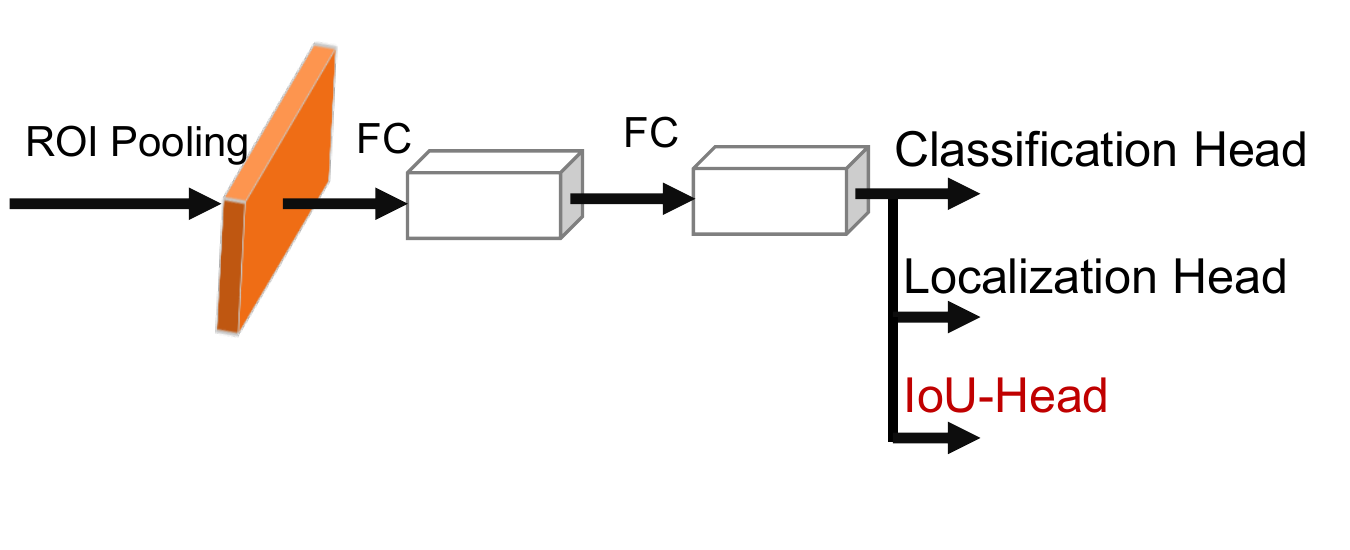}}
    \vspace{-5pt}
    {\caption{\small{ The network architectures of IoU-Net and IoU Head.}}
    \label{Figure.6}}
    \vspace{-10pt}
\end{figure}

\section{Implementation}

In modern deep CNN based detectors, the neural network does not directly estimate the coordinates of the bounding box, and instead it predicts the normalized difference value between the corresponding coordinates of the anchor or proposal box (henceforth, we only use anchor box for simplicity ) and the targeted box, and the normalization value is the width and height of anchor box. We adopt a similar strategy to generate the predicted box, but we uniformly employ the square root of the area of the anchor box to normalize all the coordinates rather than independently normalize them with the corresponding coordinate of the anchor, since the former will keep the width-height ratio of the predicted box and targeted box. Implementation details please see Algorithm 1.
\renewcommand\arraystretch{1.3}
\begin{table}[!htbp]
\small
 \centering
  \begin{threeparttable}
    \begin{tabular}{p{8cm}l}
      \hline
      ~~~~~~~~~~~~~~~~~~~~\textbf{Algorithm 1.} Training $EIoU$ Loss \\
      \hline
       \textbf{Input}: the anchor box $(x_1^a, y_1^a, x_2^a, y_2^a)$, the target box $(x_1^t, y_1^t, x_2^t, y_2^t)$ and the CNN predicted
        normalized difference value  $(x_{1_0}^d, y_{1_0}^d, x_{2_0}^d, y_{2_0}^d)$  \\
       \textbf{Output}: the $EIoU$ Loss ${\cal L}_{\rm{GI}\textrm{-}\rm{IoU}}$ \\
        Compute  $S = \sqrt{(x_2^a - x_1^a)(x_2^a - x_1^a)}$ \\
        Compute   $(x_1^{a_n}, y_1^{a_n}, x_2^{a_n}, y_2^{a_n}) = (\frac{x_1^a}{S}, \frac{y_1^a}{S}, \frac{x_2^a}{S}, \frac{y_2^a}{S}) $ \\
        Compute   $(x_1^{t_n}, y_1^{t_n}, x_2^{t_n}, y_2^{t_n}) = (\frac{x_1^t}{S}, \frac{y_1^t}{S}, \frac{x_2^t}{S}, \frac{y_2^t}{S}) $ \\
       \textbf{while} \emph{not convergence} \textbf{do} \\
       \quad Compute  $(x_{1_k}^p, y_{1_k}^p, x_{2_k}^p, y_{2_k}^p) = (x_{1_k}^d, y_{1_k}^d, x_{2_k}^d, y_{2_k}^d)$ \\
       \quad \quad $ + (x_1^{a_n}, y_1^{a_n}, x_2^{a_n}, y_2^{a_n})$ \\
       \quad Using Eq. (\ref{Eq.1})- (\ref{Eq.4}), (\ref{Eq.11})- (\ref{Eq.22}) to compute ${\cal L}_{\rm{GI}\textrm{-}\rm{IoU}}$ \\
       \quad Using Eq. (\ref{Eq.23})- (\ref{Eq.25}) to compute $(\frac{\partial \cal L}{\partial x_{1_k}^p}, \frac{\partial \cal L}{\partial y_{1_k}^p}, \frac{\partial \cal L}{\partial x_{2_k}^p}, \frac{\partial \cal L}{\partial y_{2_k}^p} )$ \\
       \quad $(\frac{\partial \cal L}{\partial x_{1_k}^d}, \frac{\partial \cal L}{\partial y_{1_k}^d}, \frac{\partial \cal L}{\partial x_{2_k}^d}, \frac{\partial \cal L}{\partial y_{2_k}^d} ) = (\frac{\partial \cal L}{\partial x_{1_k}^p}, \frac{\partial \cal L}{\partial y_{1_k}^p}, \frac{\partial \cal L}{\partial x_{2_k}^p}, \frac{\partial \cal L}{\partial y_{2_k}^p} )$ \\
       \quad Using Eq. (\ref{Eq.26}) to update $(x_{1_{k+1}}^d, y_{1_{k+1}}^d, x_{2_{k+1}}^d, y_{2_{k+1}}^d)$

       \textbf{end while}\\
      \hline
    \end{tabular}
  \end{threeparttable}
\end{table}

\renewcommand\arraystretch{1.2}
\begin{table*}[!hptb]
\small
\centering
\caption{ Ablation study by using Faster R-CNN with ResNet50 + FPN as the backbone. Models are trained on the union set of VOC\_2007\_trainval and VOC\_2012\_trainval. The results are reported on the set of VOC\_2007\_test.}
\begin{tabular}{|p{0.2cm}<{\centering} p{1.5cm}<{\centering} p{0.8cm}<{\centering} p{0.8cm}<{\centering} p{0.8cm}<{\centering} p{0.8cm}<{\centering} p{0.8cm}<{\centering}p{1.5cm}<{\centering}|l|}
\hline
 &Smooth-$l_1$ &$SIoU$ &$GIoU$ &$EIoU$ &CT &SOT  &IoU Head &AP  \\
\hline
\hline
\textcircled{1} &\checkmark &- &- &- &- &- &- &45.5 (Smooth-$l_1$ Loss, Baseline) \\
\textcircled{2} &- &\checkmark &-  &- &- &- &- &46.6 (Standard $IoU$ Loss ) \\
\textcircled{3} &- &- &\checkmark   &- &- &- &- &46.9 ($GIoU$ Loss) \\
\hline
\hline
\textcircled{4} &\checkmark  &- &-  &- &- &- &\checkmark &46.2 ( Smooth-$l_1$ Loss with IoU Head) \\
\textcircled{5}&- &- &- &\checkmark &- &- &-  &47.5  ($EIoU$ Loss) \\
\textcircled{6}&- &- &- &\checkmark  &\checkmark &- &- &47.9 ($EIoU$ Loss with CT )\\
\textcircled{7}&- &- &- &\checkmark &\checkmark  &\checkmark  &- &48.2 ($EIoU$ Loss with CT, SOT)\\
\textcircled{8}&- &- &- &\checkmark &\checkmark &\checkmark &\checkmark &{49.7} ($EIoU$ Loss with CT, SOT and IoU Head)\\
\hline
\end{tabular}
\label{Tab.3}
\end{table*}

\section{Experiment}

\subsection{Experimental Setting}

All the experiments are conducted  on the benchmark datasets -- PASCAL VOC and MS COCO. Detectors are implemented in Facebook AI Research's Detectron system \cite{Detectron2018}. Following the default settings in Detectron, we trained all the detectors on 8 NVIDIA P100 GPUs. Each mini-batch totally contains 16 images which are uniformly distributed to 8 GPUs. Input images are resized to 500 and 800 pixels along the short side on PASCAL VOC and MS COCO, respectively. No other data augmentation except of the standard horizontal image flipping is employed. Standard SGD with weight decay of $0.0001$ and  momentum of $0.9$ is adopted. We train the detectors with $20k$ iterations for PASCAL VOC and $90k$($180k$) iterations for MS COCO, and the learning rate is set to 0.02 at the begin and then decreased by a factor of $0.1$ after $12k$ and $18k$ for PASCAL VOC and $60k$($120k$) and $80k$($160k$) iterations for MS COCO, respectively.  We comply with the MS COCO evaluation protocol to report the experimental results.

\subsection{Ablation Study}

We implement ablation experiments on PASCAL VOC to clarify the contributions of the proposed $EIoU$, CT, SOT and IoU Head, and the results are reported in Table \ref{Tab.3}.

As shown in Table \ref{Tab.3}, with the standard {$SIoU$} based loss replacing the baseline Smooth-$\ell_1$ Loss, the performance is improved to some extents (+1.1\% mAP, comparing \textcircled{1} and \textcircled{2}). Substituting {$SIoU$} with {$GIoU$} further boost the performance with scores  +0.3\% mAP (comparing \textcircled{2} and \textcircled{3}), which is consistence with the results in Table 5 in \cite{GIOU_2019}. Comparing with {$GIoU$}, individually equipping $EIoU$ can bring more substantial improvement(+0.9 \% mAP,  \textcircled{2} and \textcircled{5}), which indicates $EIoU$ may be more piratically powerful than $GIoU$.   With the help of CT, the performance is continually promoted (+0.4 \% mAP,  \textcircled{5} and \textcircled{6}). Exploiting SOT in train further receives a gain of +0.3 \% mAP scores (comparing \textcircled{6} and \textcircled{7}). Adding IoU Head to the net significantly improves the performance (+1.5\% mAP, comparing \textcircled{7} and \textcircled{8}). Interestingly, $EIoU$ Loss with IoU Head can generate better cooperativity than  Smooth-$l_1$ Loss with IoU Head(+1.5\% mAP vs +0.7\% mAP, comparing \textcircled{7} , \textcircled{8} and comparing \textcircled{1} , \textcircled{4}). The reason for it is that IoU Head has close relation to a IoU related  loss, so they can receive the effect of "$1+1>2$". Totally, the proposed systematical method including $EIoU$, CT, SOT and IoU Head yields significant gains, which is 4.2\% higher than the baseline Smooth-$l_1$ Loss that is overwhelmingly used in popular detectors (comparing \textcircled{1} and \textcircled{8}).

\renewcommand\arraystretch{1.1}
\begin{table*}[!hptb]
\small
\centering
\caption{ Comparisons of Average Precision(AP) of Smooth-$\ell_1$ Loss, $GIoU$ Loss and $EIoU$ Loss attaching to RetinaNet and Faster-RCNN with ResNet50+FPN as the backbone. Models are trained on the union set of VOC\_2007\_trainval and VOC\_2012\_trainval. The results are reported on the set of VOC\_2007\_test.}
\begin{tabular}{|l|p{3.0cm}<{\centering}|p{1.0cm}<{\centering}|p{1.0cm}<{\centering}p{1.0cm}<{\centering}p{1.0cm}<{\centering}|
p{1.0cm}<{\centering}p{1.0cm}<{\centering}p{1.0cm}<{\centering}|}
\hline
loss  &Net   &$\rm{mAP}$ &$\rm{AP_{50}}$   &$\rm{AP_{75}}$  &$\rm{AP_{90}}$ &$\rm{AP_{S}}$   &$\rm{AP_{M}}$  &$\rm{AP_{L}}$\\
\hline
\hline
Smooth-$\ell_1$ Loss (Baseline) \cite{FPN2017} &RetinaNet  &44.2 &{70.5}  &47.5  &25.0 &9.8 &28.3 &54.3 \\
$GIoU$ Loss \cite{GIOU_2019} &RetinaNet  &45.2 &{70.0}  &48.2  &28.8 &9.9 &29.1 &55.5 \\
$CIoU$ Loss \cite{DIoU2020} &RetinaNet  &45.9 &{70.3}  &50.1  &30.3 &10.1 &30.2 &55.9 \\
Ours  &RetinaNet      &\textbf{46.4} &\textbf{71.1}  &\textbf{49.1}  &\textbf{32.2} &\textbf{10.5}  &\textbf{31.4}  &\textbf{56.3} \\

\hline
\hline
Smooth-$\ell_1$ Loss (Baseline) \cite{FPN2017}  &Faster-RCNN+FPN  &45.5 &72.6  &49.8  &25.2 &10.0  &29.5  &55.6\\
$GIoU$ Loss \cite{GIOU_2019} &Faster-RCNN+FPN  &46.9 &73.1  &50.8  &28.6 &9.6  &31.0  &57.2\\
$CIoU$ Loss \cite{DIoU2020} &Faster-RCNN+FPN    &48.0 &73.5  &51.2  &28.8 &9.8  &31.8  &57.9 \\
Ours  &Faster-RCNN+FPN        &\textbf{49.7} &\textbf{73.7}  &\textbf{54.1}  &\textbf{33.4} &\textbf{11.9}  &\textbf{32.8}  &\textbf{59.8}\\
\hline
\end{tabular}
\label{Tab.1}
\end{table*}
\renewcommand\arraystretch{1.1}
\begin{table*}[!hptb]
\small
\centering
\caption{ Comparisons of Average Precision(AP) of of Smooth-$\ell_1$ Loss, $GIoU$ Loss and $EIoU$ Loss attaching to RetinaNet and Faster-RCNN with Res50 + FPN as the backbone.  Models are trained on the union set of COCO\_2017\_train. The results are reported on the set of COCO\_2017\_val.}
\begin{tabular}{|l|p{3.0cm}<{\centering}|p{1.0cm}<{\centering}|p{1.0cm}<{\centering}p{1.0cm}<{\centering}p{1.0cm}<{\centering}|
p{1.0cm}<{\centering}p{1.0cm}<{\centering}p{1.0cm}<{\centering}|}
\hline
loss  &Net   &$\rm{mAP}$ &$\rm{AP_{50}}$   &$\rm{AP_{75}}$  &$\rm{AP_{90}}$ &$\rm{AP_{S}}$   &$\rm{AP_{M}}$  &$\rm{AP_{L}}$\\
\hline
\hline
Smooth-$\ell_1$ Loss (Baseline) \cite{FPN2017} &RetinaNet  &35.7 &54.7  &38.5  &22.8 &19.5  &39.9 &47.5 \\
$GIoU$ Loss \cite{GIOU_2019} &RetinaNet  &36.2 &54.5  &38.9  &24.4 &19.6  &40.3  &48.3\\
$CIoU$ Loss \cite{DIoU2020} &RetinaNet  &36.7 &54.7  &39.3  &25.0 &20.1  &40.9  &48.9\\
Ours &RetinaNet         &\textbf{37.1} &\textbf{54.8}  &\textbf{39.6}  &\textbf{25.7} &\textbf{20.6}  &\textbf{41.3}  &\textbf{49.2} \\
\hline
\hline
Smooth-$\ell_1$ Loss (Baseline) \cite{FPN2017} &Faster-RCNN+FPN  &36.7 &{58.5}  &39.6  &21.2  &{21.1}  &39.8  &48.1\\
$GIoU$ Loss  \cite{GIOU_2019}  &Faster-RCNN+FPN  &37.8 &58.4  &41.0  &24.8 &21.2  &40.9  &49.8 \\
$CIoU$ Loss \cite{DIoU2020}   &Faster-RCNN+FPN  &38.4 &58.3  &41.5  &25.2 &21.2  &41.3  &50.2 \\
Ours  &Faster-RCNN+FPN           &\textbf{39.0} &\textbf{58.7}  &\textbf{42.1}  &\textbf{26.5} &\textbf{22.3}  &\textbf{41.7}  &\textbf{50.7} \\
\hline
\end{tabular}
\label{Tab.2}
\end{table*}

\begin{figure}[t]
    \vspace{-1em}
    \centering
    \subfigure[]{\includegraphics[width=0.5\linewidth]{./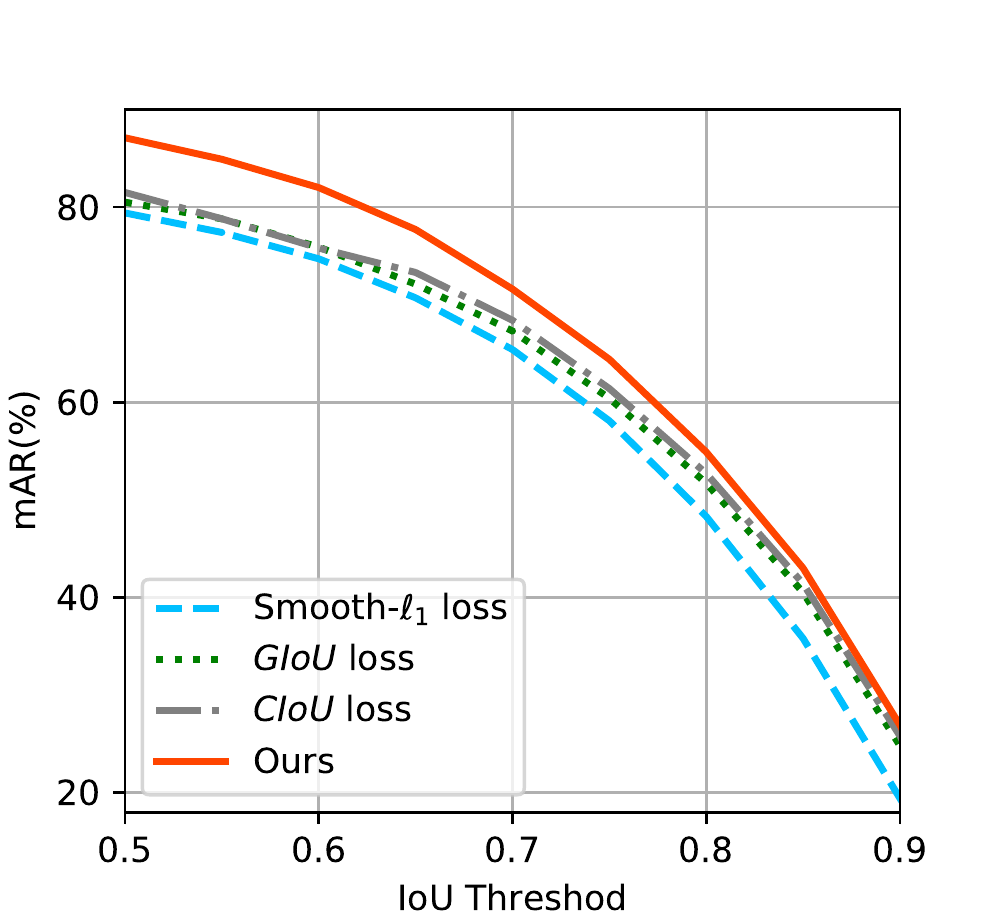}}  \hspace{-10 pt}
    \subfigure[]{\includegraphics[width=0.5\linewidth]{./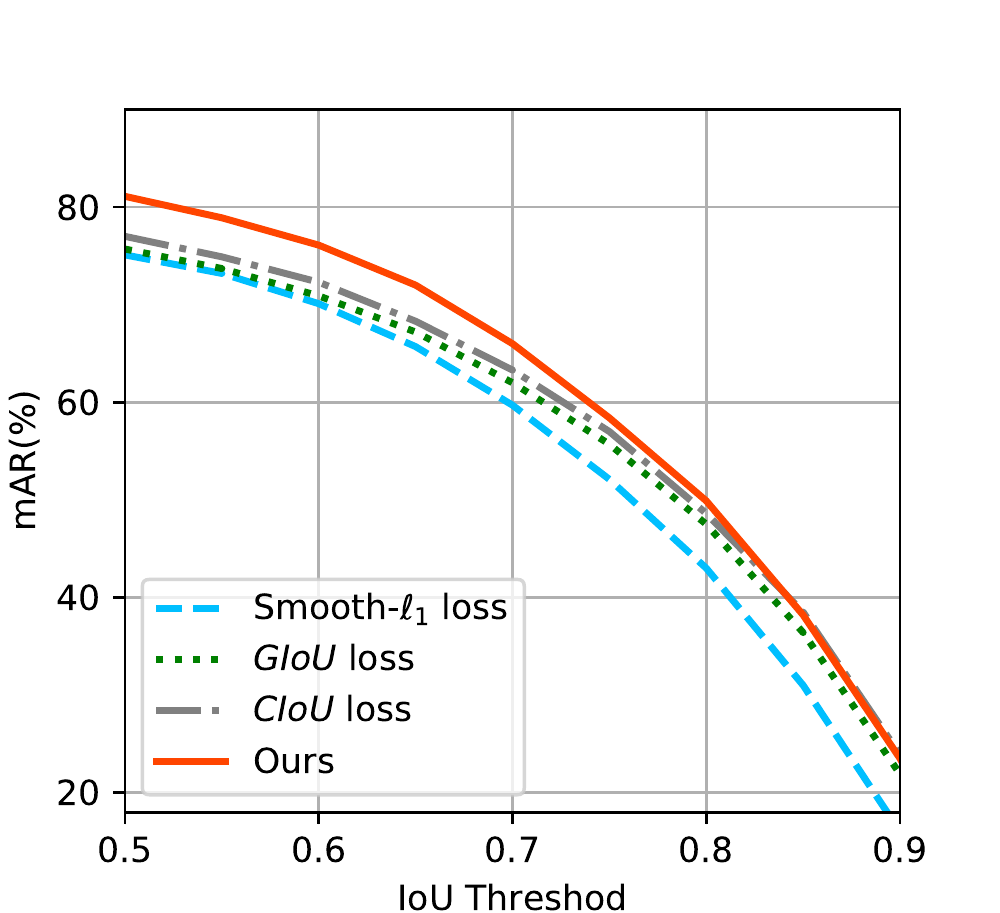}}
    \caption{\small{IoU threshold against Average Recall(AR) of Faster RCNN + FPN with Smooth-$\ell_1$ Loss, $GIoU$ Loss and our approach on (a) PASCAL VOC and (b)MS COCO. Models are trained
on (a) the union set of VOC 2007 trainval and VOC 2012 trainval and (b) the set of COCO 2017 train. The results are demonstrated on (a) the set of VOC 2007 test and (b) the COCO 2017 val.}}
   \vspace{-1em}
    \label{Fig.5}
\end{figure}

\renewcommand\arraystretch{1.1}
\begin{table*}[!hptb]
\small
\centering
\caption{  Performance of state-of-the-art detectors on the set of COCO test-dev.  Our Model is trained on the  set of COCO\_2017\_train.}
\begin{tabular}{|l|l| p{1.2cm}<{\centering}|p{1.2cm}<{\centering}p{1.2cm}<{\centering}p{1.2cm}<{\centering}|p{1.2cm}<{\centering} p{1.2cm}<{\centering}p{1.2cm}<{\centering}|}
\hline
Method  &Backbone &Year &$\rm{mAP}$ &$\rm{AP_{50}}$  &$\rm{AP_{75}}$ & $\rm{AP_S}$ & $\rm{AP_M}$ & $\rm{AP_L}$\\
\hline
\hline
YOLOv3 \cite{YOLOv3_2018}  &DarkNet-53 &2018  &33.0 &57.9 &34.4 &18.3 &35.4  &41.9\\
SSD513 \cite{SSD_2016}  &ResNet-101  &2016 &31.2 &50.4 &33.3 &10.2 &34.5  &49.8\\
RetinaNet800\cite{Focal_loss_retinanet_2017} &ResNeXt-101 &2017  &39.1 &59.1 &42.3 &21.8 &42.7  &50.2\\
DSSD513 \cite{DSSD_2017} &ResNet-101 & 2017 &33.2 &53.3 &35.2 &13.0 &35.4  &51.1\\
RefineDet512\cite{RefineDet2018} &ResNet-101 & 2018 &36.4 &57.5 &39.5 &16.6 &39.9  &51.4\\
CornerNet511\cite{CornerNet_2018} &Hourglass-104 &2018 &40.5 &56.5 &43.1 &19.4 &42.7 &53.9 \\
CenterNet \cite{CornerNet_2018} &Hourglass-104 &2019 &42.1 &61.1 &45.9 &24.1 &45.5 &52.8 \\
FCOS\cite{FCOS2019} &ResNeXt-101 &2019 &43.2 &62.8 & 46.6 &26.5 &46.2 &53.3 \\
\hline
\hline
Faster-R-CNN +++ \cite{ResNet2016} &ResNet-101 &2016 &34.9 &55.7 &37.4 &15.6 &38.7  &50.9\\
Faster-RCNN w FPN \cite{FPN2017} &ResNet-101  &2016 &36.2 &59.1 &39.0 &18.2 &39.0  &48.2\\
Mask R-CNN \cite{Mask_RCNN_2017}  &ResNeXt-101 &2017 &39.8 &{62.3} &43.4 &22.1 &43.2  &51.2\\
DetNet \cite{DetNet_2018}     &DetNet  &2018     &40.3 &62.1 &43.8 &{23.6} &42.6 &50.0 \\
IoU-Net \cite{IoUNet_2018} &ResNet-101 &2018 &40.6 &59.0 &- &- &-  &-\\
TridenNet w \emph{2fc} \cite{TrideNet2019} &ResNet-101 &2019  &42.0 &63.5 &45.5 &24.9 &47.0 &56.9 \\
Grid R-CNN \cite{GridRCNN2019}  &ResNeXt-101 &2019 &43.2 &63.0 &46.6 &25.1 &46.5 &55.2 \\
\hline
\hline
Ours  &ResNet-101 &2019 &{42.2} &61.8 &{46.1} &24.4 &{45.2}  &{55.4} \\
Ours  &ResNeXt-101 &2019 &\textbf{44.1} &\textbf{63.7} &\textbf{47.6} &\textbf{26.8} &\textbf{47.6}  &\textbf{57.1} \\
\hline

\end{tabular}
\vspace{-0.5em}
\label{Tab.4}
\end{table*}

\subsection{Comparison to the Related Localization Losses }
The proposed systematical method is mainly built on localization loss, so we will extensively compare the proposed method to widely-used Smooth-$\ell_1$ Loss  and the related $GIoU$ Loss \cite{GIOU_2019} and $CIoU$ \cite{DIoU2020} Loss in this subsection. For simplicity, our systematical method is referred to as $EIoU$ Loss henceforth.  All the losses are attached to RetinaNet (that is a typical one-stage model)  and Faster-RCNN (that is a typical  two-stage detection model) during  training. Overall Mean average Precision(mAP) for all the three losses is reported in  Table \ref{Tab.1}-\ref{Tab.2}. Besides,  the results of Average Precision (AP) at IoU thresholds: $[0.5, 0.75, 0.90]$ and for individual small-size, medium-size and large-size objects are also listed for detailed comparison.

As shown in Table \ref{Tab.1} and \ref{Tab.2}, compared with Smooth-$\ell_1$ Loss and $GIoU$ Loss,   $EIoU$ Loss in one-stage and two-stage detectors can steadily yield gains on PASCAL VOC and MS COCO. Specifically, for the baseline Smooth-$\ell_1$ Loss that is dominant in popular detectors,  our approach combining Faster R-CNN   substantially boosts $4.2\%$ AP and $1.2\%$ mAP on PASCAL VOC  and COCO, respectively. When comparing with  $GIoU$ Loss,  $EIoU$ loss can still consistently surpass it by a more than $2.0\%$ margin on PASCAL VOC and an $1.0\%$ margin on COCO.

There is an interesting phenomenon that  when the $IoU$ threshold is set to $0.5$ ,  the performance of our approach is close to Smooth-$\ell_1$ Loss. However, when the threshold grows higher,  $EIoU$ Loss  gradually outperforms  Smooth-$\ell_1$ Loss and $GIOU$ Loss,. Especially {{at $\rm{AP}_{90}$, comparing with Smooth-$\ell_1$ Loss, $EIoU$ Loss improves 8.2\% on PASCAL VOC dataset and  5.3\% on  MS COCO dataset }}. The reason for it is $EIoU$ Loss can help a detector to predict more accurate bounds than  Smooth-$\ell_1$ Loss. It is known there is a gap between Smooth-$\ell_1$ Loss and the final evaluation IoU, and the relative gap is enlarging as two boxes are gradually matched, while $EIoU$ is exactly equivalent to $IoU$ when two boxes are overlapping. Moreover, Smooth-$\ell_1$  Loss  decreases quicker than $EIoU$ Loss as two boxes are gradually matched, so during training Smooth-$\ell_1$  Loss commonly gives less attention to better matched pair-boxes. Therefore, comparing to Smooth-$\ell_1$ Loss, $EIoU$ Loss will receive more gains when the final evaluation metric (IoU) is stricter.

Another phenomenon observed from  Table \ref{Tab.1} and \ref{Tab.2} is that $EIoU$ Loss seems to be superior to detect small-size objects, comparing to  {$GIoU$} Loss. Although the overall performance of {$GIoU$} Loss is 1.4\% higher than   Smooth-$\ell_1$ Loss on  PASCAL VOC dataset with Faster-RCNN, but Smooth-$\ell_1$ Loss  and {$GIoU$} Loss obtain similar scores (10.0\% and 9.6\%) for small-size objects, which means {$GIoU$} Loss is still weak to detect small-size objects. $EIoU$ Loss achieves 11.9 \% under the same conditions. The superiority of $EIoU$ loss to detect small-size objects stems from the IoU predict head. In post-processing, conventionally we use classification confidence to guide non-maximum suppression (NMS) to filter redundant bounding boxes. Commonly, the correlation of classification confidence and localization confidence is weaker when detecting smaller objects. In our method, we use the predicted IoU confidence to correct the bias of classification confidence and localization confidence. Hence, our method has a better capacity for finding smaller objects.

Additionally, in terms of improvement,  Faster-RCNN + FPN with $EIoU$ Loss  performs better than RetinaNet with $EIoU$ Loss. It may be due to that there are denser anchor boxes in RetinaNet. Hence it is not so difficult to exactly regress the targeted boxes for Smooth-$\ell_1$ Loss.

As shown in Fig \ref{Fig.5},  the superior performance of $EIoU$ Loss for \emph{Average Recall (AR)} are more obvious than that for AP  across the different value of $IoU$ threshold, which means $EIoU$ Loss is more powerful to find more objects, comparing with the popular localization losses.

\subsection{Comparisons to State-of-the-Art Detectors}
We evaluate $EIoU$ Loss attached to FPN on the MS COCO 2019 $test\textrm{-}dev$ set with $180k$ iterations and compare the results to state-of-the-art one-stage and two-stage detectors. The experimental results are presented in Table \ref{Tab.4}. For fair comparison,  we only list the results of competitors of a single model with no sophisticate data argumentation in the training and testing. Without bells and whistle, our method with ResNeX-64x4d-101+FPN achieves $44.1\%$ mAP,  which surpasses the counterparts in the Table \ref{Tab.4} by a large margin.  Compared to the closest competitor Grid R-CNN \cite{GridRCNN2019}, the superiority of the proposed approach is more substantial at the higher IoU threshold (0.75), improving more than $1.0\%$ ($47.6\%$ vs $46.6 \%$), which is consistent with that our method can predict more precise bounding boxes.

\section{Conclusion and Discussion}

Smooth-$\ell_1$ Loss and its variants  dominate the localization loss in modern CNN based detectors. Nevertheless, their oversimplified assumption that four coordinate variables of a bounding box are independent does not accord with the fact. Therefore the localization performance of these detectors might suffer degradation. In light of this, we propose a generalized $EIoU$ to address this problem. To make the $EIoU$ based loss not oscillated in the neighbourhood of  the minimum and steadily optimized in train, we introduce  CT and  SOT. Moreover, we present  IoU Head to further improve localization accuracy.

Very Recently, a wide variety of  anchor-free detectors \cite{CornerNet_2018,FSAF2019,FCOS2019,zhou2019objects,CenterNet2019} were developed and receive more and more attention. We think the proposed $EIoU$ Loss may be more applicable to these detection models, because there may exist more non-overlapping box pairs due to no anchors.

We provide a new route to design $IoU$ based losses, and all the decreasing functions of $IoU$ can be modified and become an applicable localization loss through CT.  We just tried the simplest $-IoU$, and many other functions not limited to $\frac{1}{IoU}$, $-\ln(IoU)$ might be more appropriate.  Therefore there is great potential to further the performance by  exploiting these techniques.

More importantly,  CT and SOT  are so general that they can beyond the field of detection.  CT can help any loss to have zero-gradient at the minimum and make it possible to achieve the minimum through gradient descend algorithms.  SOT can help many types of losses, including but not limited fractional losses ( factional losses are common in machine learning tasks, since we usually need to minimize an objective function and maximize another simultaneously ), to steadily and smoothly arrive at the minimum. Therefore,  CT and  SOT may find more applications in other fields.

{
\bibliographystyle{IEEEtran}
\bibliography{refs}
}

\section*{Appendix}
\subsection*{A. Gradients of Smooth-EIoU Loss  }
In this section, we will deduce all the partial derivative of Smooth-$EIoU$ loss in Eq.(\ref{Eq.22}) w.r.t. $x_1^p$, $y_1^p$, $x_2^p$ and $y_2^p$. We first compute the partial derivative, \emph{i.e.},
\begin{equation}
\frac{\partial I_e}{\partial x_1^p} = \left\{
 \begin{aligned}
    &y_{\min}- y_{\max},  ~~~~~~~~~{\rm if} ~ x_1^p \ge x_1^t ~{\rm and} ~ x_1 \le x_2, \\
    &2y_0 - y_{\max} - y_1,  ~~{\rm if}~  x_1^p \ge x_1^t ~{\rm and}~  x_1 > x_2,  \\
    &0, ~~~~~~~~~~~~~~~~~~~~~~~~~~\;{\rm if}~ x_1^p < x_1^t.
 \end{aligned}
 \right .
 \label{Eq.A_23}
\end{equation}

\begin{equation}
\frac{\partial I_e}{\partial y_1^p} = \left\{
 \begin{aligned}
    &x_{\min}- x_{\max},  ~~~~~~~~~{\rm if} ~ y_1^p \ge y_1^t ~{\rm and} ~ y_1 \le y_2, \\
    &2x_0 - x_{\max} - x_1,  ~~{\rm if}~  y_1^p \ge y_1^t ~{\rm and}~  y_1 > y_2,  \\
    &0, ~~~~~~~~~~~~~~~~~~~~~~~~~~~\;{\rm if}~ y_1^p < y_1^t.
 \end{aligned}
 \right .
 \label{Eq.A_24}
\end{equation}

\begin{equation}
\frac{\partial I_e}{\partial x_2^p} = \left\{
 \begin{aligned}
    &y_2- y_1,  ~~~~~~~~~~~~~~~~~~{\rm if} ~ x_2^p \le x_2^t ~{\rm and} ~ x_1 \le x_2, \\
    &y_2 + y_{\min} - 2y_0,  ~~~{\rm if}~  x_2^p \le x_2^t ~{\rm and}~  x_1 > x_2,  \\
    &0, ~~~~~~~~~~~~~~~~~~~~~~~~~~~\;{\rm if}~ x_2^p > x_2^t.
 \end{aligned}
 \right .
 \label{Eq.A_25}
\end{equation}

\begin{equation}
\frac{\partial I_e}{\partial y_2^p} = \left\{
 \begin{aligned}
    &x_2- x_1,  ~~~~~~~~~~~~~~~~~{\rm if} ~ y_2^p \le y_2^t ~{\rm and} ~ y_1 \le y_2, \\
    &x_2 + x_{\min} - 2x_0,  ~~{\rm if}~  y_2^p \le y_2^t ~{\rm and}~  y_1 > y_2,  \\
    &0, ~~~~~~~~~~~~~~~~~~~~~~~~~~~\;{\rm if}~ y_2^p > y_2^t.
 \end{aligned}
 \right .
 \label{Eq.A_26}
\end{equation}

And then we compute the the  partial derivative of $U_e$
\begin{equation}
\frac{\partial U_e}{\partial x_1^p} = (y_1^p - y_2^p) - \frac{\partial I_e}{\partial x_1^p},
\end{equation}

\begin{equation}
\frac{\partial U_e}{\partial y_1^p} = (x_1^p - x_2^p) - \frac{\partial I_e}{\partial y_1^p},
\end{equation}

\begin{equation}
\frac{\partial U_e}{\partial x_2^p} = (y_2^p - y_1^p) - \frac{\partial I_e}{\partial x_2^p},
\end{equation}

\begin{equation}
\frac{\partial U_e}{\partial y_2^p} = (x_2^p - x_1^p) - \frac{\partial I_e}{\partial y_2^p}.
\end{equation}

Finally we obtain the the partial derivative of Smooth-$EIoU$ Loss w.r.t. $x_1^p$
\begin{align}
  &\frac{{\partial \cal} L_{\rm{Smooth}\textrm{-}\rm{EIoU}}}{\partial x_1^p} = 2\left(1- \frac{I_e}{U_e}\right)\frac{I_e\frac{\partial U_e}{\partial x_1^p} - \frac{\partial I_e}{\partial x_1^p}U_e  }{U_e^2}, \\
  &\frac{{\partial \cal} L_{\rm{Smooth}\textrm{-}\rm{EIoU}}}{\partial y_1^p} = 2\left(1- \frac{I_e}{U_e}\right)\frac{I_e\frac{\partial U_e}{\partial y_1^p} - \frac{\partial I_e}{\partial y_1^p}U_e  }{U_e^2}, \\
  &\frac{{\partial \cal} L_{\rm{Smooth}\textrm{-}\rm{EIoU}}}{\partial x_2^p} = 2\left(1- \frac{I_e}{U_e}\right)\frac{I_e\frac{\partial U_e}{\partial x_2^p} - \frac{\partial I_e}{\partial x_2^p}U_e  }{U_e^2}, \\
  &\frac{{\partial \cal} L_{\rm{Smooth}\textrm{-}\rm{EIoU}}}{\partial y_2^p} = 2\left(1- \frac{I_e}{U_e}\right)\frac{I_e\frac{\partial U_e}{\partial y_2^p} - \frac{\partial I_e}{\partial y_2^p}U_g  }{U_g^2}.
\end{align}

\subsection*{B. Proof of Theorem 1}
\begin{theorem}
If the gradient of $f(x)$, denoted as $\nabla f(x)$, is Lipschitz continuous, \emph{i.e.},
\begin{equation}
\Vert \nabla f(x_1) - \nabla f(x_2)\Vert \le L \Vert x_1 - x_2\Vert_2,
\label{Eq.A_27}
\end{equation}
 the function $g(x)$ is positive and bounded, i.e., $0 < g(x) \le M$, and the learning rate satisfies $\alpha < \frac{1}{LM}$,  the  update update rule,
\begin{equation}
x_{k+1} = x_k - \alpha g(x_k) \nabla f(x_k),
\label{Eq.A_28}
\end{equation}
will make $f(x)$ steadily decrease.
\end{theorem}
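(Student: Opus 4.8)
The plan is to reduce the statement to the classical \emph{descent lemma}: the Lipschitz continuity of $\nabla f$ in Eq.~(\ref{Eq.A_27}) furnishes a quadratic upper bound on the increment of $f$ along any step, and I would substitute the proposed update Eq.~(\ref{Eq.A_28}) into this bound and then control the sign of the remaining correction term using the hypotheses $0<g(x)\le M$ and $\alpha<\frac{1}{LM}$. Everything hinges on first producing that quadratic bound; the rest is elementary sign analysis.

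First I would establish the quadratic upper bound. Using the fundamental theorem of calculus along the segment joining $x_k$ and $x_{k+1}$, I write the increment as an integral of the directional derivative, add and subtract $\nabla f(x_k)^\top(x_{k+1}-x_k)$, and apply Cauchy--Schwarz together with the Lipschitz hypothesis to the integrand; carrying out the scalar $t$-integration yields
\[
f(x_{k+1}) \le f(x_k) + \nabla f(x_k)^\top (x_{k+1}-x_k) + \frac{L}{2}\,\Vert x_{k+1}-x_k\Vert_2^2 .
\]
I would then insert $x_{k+1}-x_k = -\alpha\, g(x_k)\,\nabla f(x_k)$ from Eq.~(\ref{Eq.A_28}). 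The linear term becomes $-\alpha\, g(x_k)\Vert\nabla f(x_k)\Vert_2^2$ and the quadratic term becomes $\frac{L}{2}\alpha^2 g(x_k)^2\Vert\nabla f(x_k)\Vert_2^2$, so that
\[
f(x_{k+1}) \le f(x_k) - \alpha\, g(x_k)\left(1-\frac{L}{2}\alpha\, g(x_k)\right)\Vert\nabla f(x_k)\Vert_2^2 .
\]

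Finally I would argue that the bracketed coefficient is strictly positive, which is exactly where the threshold $\frac{1}{LM}$ is used. Since $0<g(x_k)\le M$ and $\alpha<\frac{1}{LM}$, I obtain $\frac{L}{2}\alpha\, g(x_k)\le\frac{L}{2}\alpha M<\frac{1}{2}$, hence $1-\frac{L}{2}\alpha\, g(x_k)>\frac{1}{2}>0$. Combined with $\alpha>0$, $g(x_k)>0$, and $\Vert\nabla f(x_k)\Vert_2^2\ge 0$, the entire subtracted quantity is nonnegative, giving $f(x_{k+1})\le f(x_k)$, with strict decrease whenever $\nabla f(x_k)\neq 0$; this is the claimed steady decrease. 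The only genuinely technical step is deriving the quadratic upper bound from the Lipschitz condition, so I expect that to be the main obstacle; once it is in hand, the substitution and the sign estimate are routine and make transparent why the positivity and boundedness of $g$, rather than any additional structure, suffice for convergence.
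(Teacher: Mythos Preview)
Your argument is correct. Both you and the paper start from the same descent lemma $f(y)\le f(x_k)+\langle\nabla f(x_k),y-x_k\rangle+\frac{L}{2}\Vert y-x_k\Vert_2^2$, derived from the Lipschitz hypothesis, so the technical core is identical. The only difference lies in the final algebraic step. You substitute $x_{k+1}-x_k=-\alpha g(x_k)\nabla f(x_k)$ directly and bound the sign of the scalar factor $1-\tfrac{L}{2}\alpha g(x_k)$; this is the shortest route and also yields the quantitative per-step decrease $\alpha g(x_k)\bigl(1-\tfrac{L}{2}\alpha g(x_k)\bigr)\Vert\nabla f(x_k)\Vert_2^2$. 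The paper instead weakens $\tfrac{L}{2}$ to $\tfrac{1}{2\alpha g(x_k)}$ (using $\alpha<\tfrac{1}{LM}$ and $g(x_k)\le M$), completes the square to obtain a quadratic surrogate $P(x;x_k)$ whose minimizer is exactly $x_{k+1}$, and concludes via the chain $f(x_{k+1})\le P(x_{k+1};x_k)\le P(x_k;x_k)=f(x_k)$. That majorization--minimization presentation explains \emph{why} the update rule is natural (it is the exact minimizer of the surrogate), whereas your direct substitution is more elementary and gives a sharper numerical decrement. Either approach proves the theorem.
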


\begin{proof}
From Eq. (\ref{Eq.A_27}), we can deduce that
\begin{equation}
f(x) \le f(x_k) + \langle\nabla f(x_k), x - x_k \rangle + \frac{L}{2}\Vert x - x_k \Vert_2^2.
\end{equation}
It is known $\alpha < \frac{1}{LM}$ and $0 < g(x_k) \le M$, hence we have
\begin{equation}
\begin{aligned}
f(x) &\le f(x_k) + \langle\nabla f(x_k), x - x_k\rangle + \frac{1}{2\alpha M}\Vert x - x_k \Vert_2^2 \\
&\le f(x_k) + \langle\nabla f(x_k), x - x_k\rangle + \frac{1}{2\alpha g(x_k)}\Vert x - x_k \Vert_2^2.
\end{aligned}
\label{Eq.30}
\end{equation}

The right side of Eq. (\ref{Eq.30}) can be further equivalently reformulated as
\begin{equation}
\begin{aligned}
P(x;x_k) = &f(x_k) + \langle\nabla f(x_k), x - x_k\rangle + \frac{1}{2\alpha g(x_k)}\Vert x - x_k \Vert_2^2 \\
= &f(x_k) +  \frac{1}{2\alpha g(x)} \Vert x - (x_k - \alpha g(x_k) \nabla f(x_k) )\Vert_2^2 \\
&- \frac{\alpha g(x_k)}{2}\Vert \nabla f(x_k) \Vert_2^2
\end{aligned}
\label{Eq.31}
\end{equation}

It is easy to know  $x_{k+1} = x_k - \alpha g(x_k) \nabla f(x_k) $ in Eq. (\ref{Eq.A_28}) is the minimal point of Eq. (\ref{Eq.31}) , and then we  obtain
\begin{equation}
f(x_{k+1}) \le P(x_{k+1};x_k ) \le P(x_k;x_k) = f(x_k).
\end{equation}
it inidicates $f(x) $ will decrease monotonicly via the update rule in Eq. (\ref{Eq.28}), and $f(x)$   we arrive the conclusion. $\square$
\end{proof}

\subsection*{C. Experimental Examples}
Figure \ref{Figure.8} shows test examples of the VOC2007\_test\_set trained using Smooth-$\ell_1$ Loss, $GIoU$ Loss and the proposed Smooth-$EIoU$ Loss with Faster-RCNN with Res50 backbone and FPN architecture.  The visual results indicate that the Smooth-$IoU$ Loss can help to generate more precise bounding boxes, which  verifies the conclusions in Section 2. Additionally,  comparing with Smooth-$\ell_1$ Loss, $GIoU$ Loss is more helpful to better localization, which is consistent with the results in \cite{GIOU_2019}.

\begin{figure*}[htbp]

    \subfigure{\includegraphics[width=0.8\linewidth]{./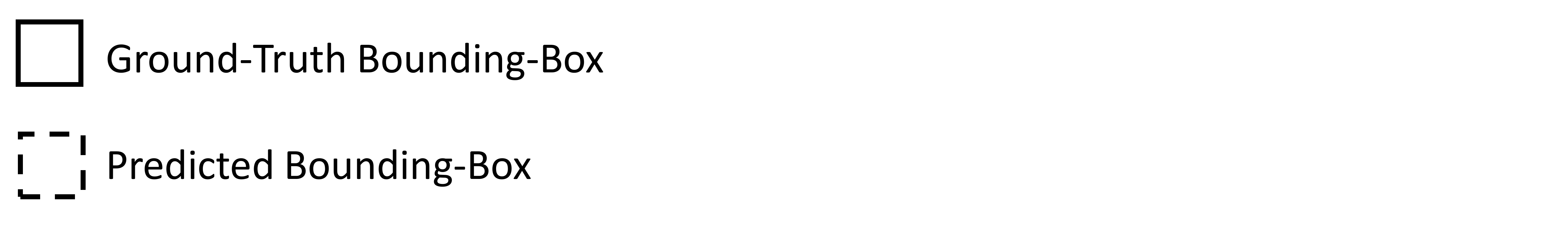}}
    \centering

    \subfigure{\includegraphics[width=0.2\linewidth]{./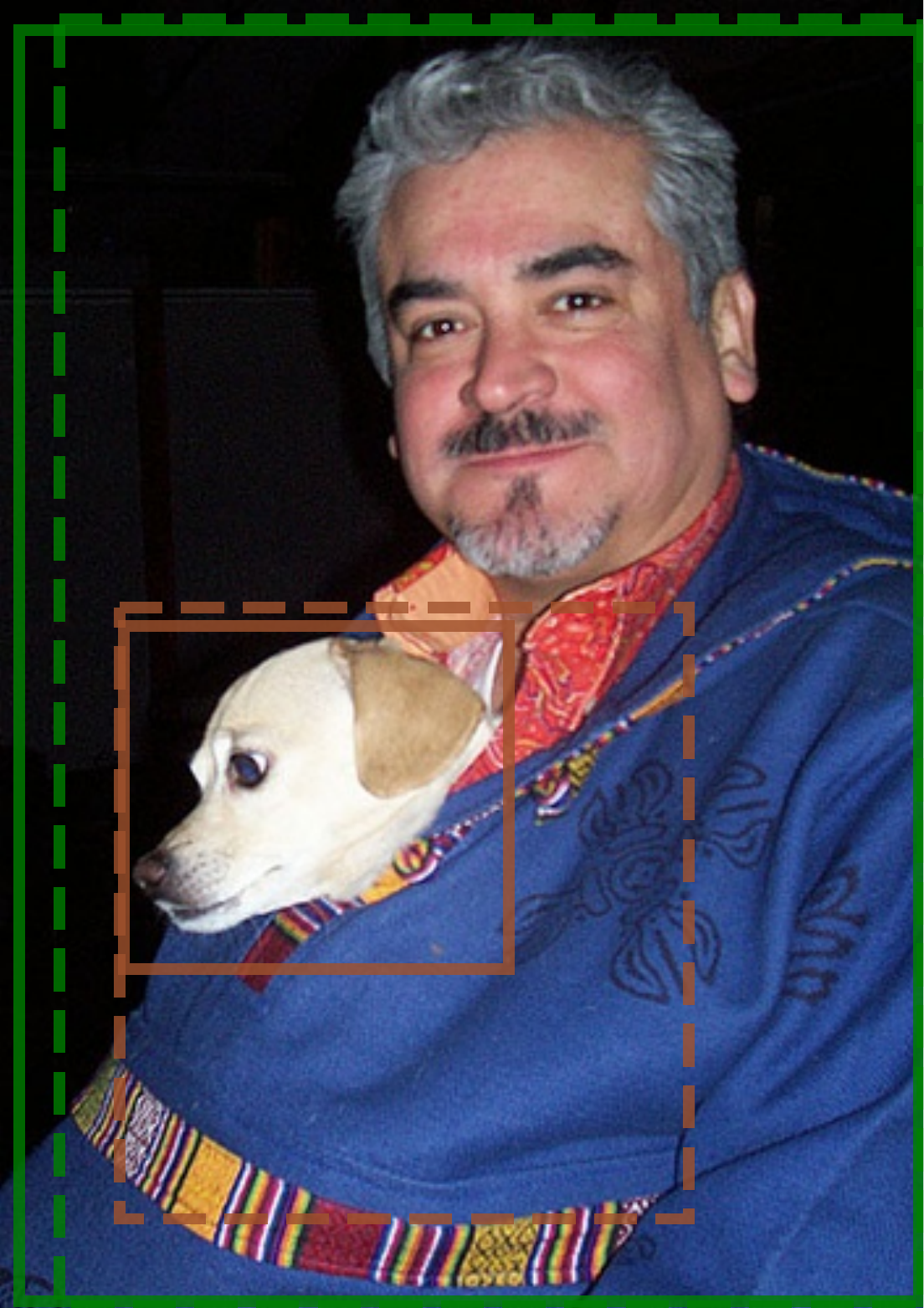}} \hspace{1pt}
    \subfigure{\includegraphics[width=0.2\linewidth]{./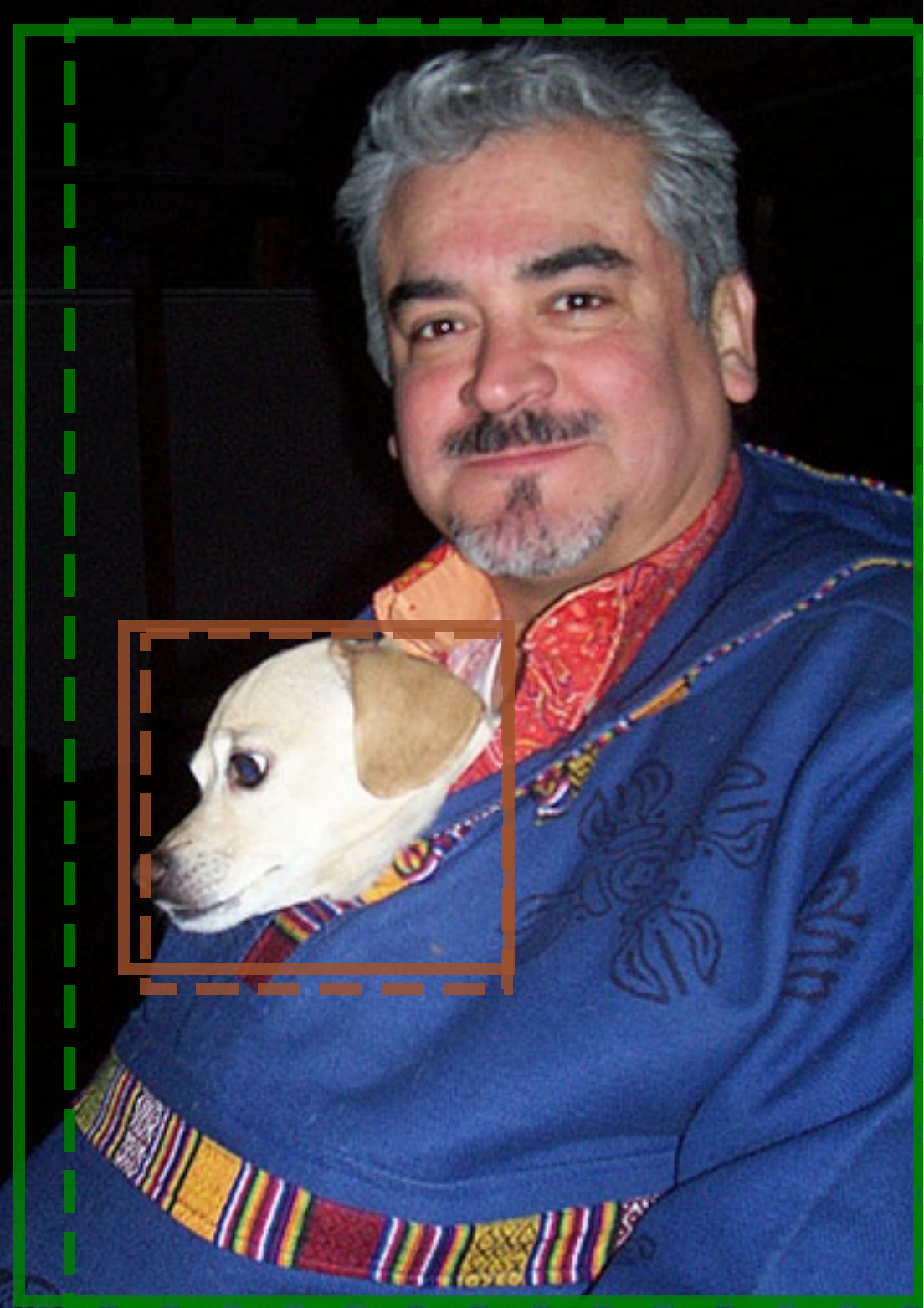}} \hspace{1pt}
    \subfigure{\includegraphics[width=0.2\linewidth]{./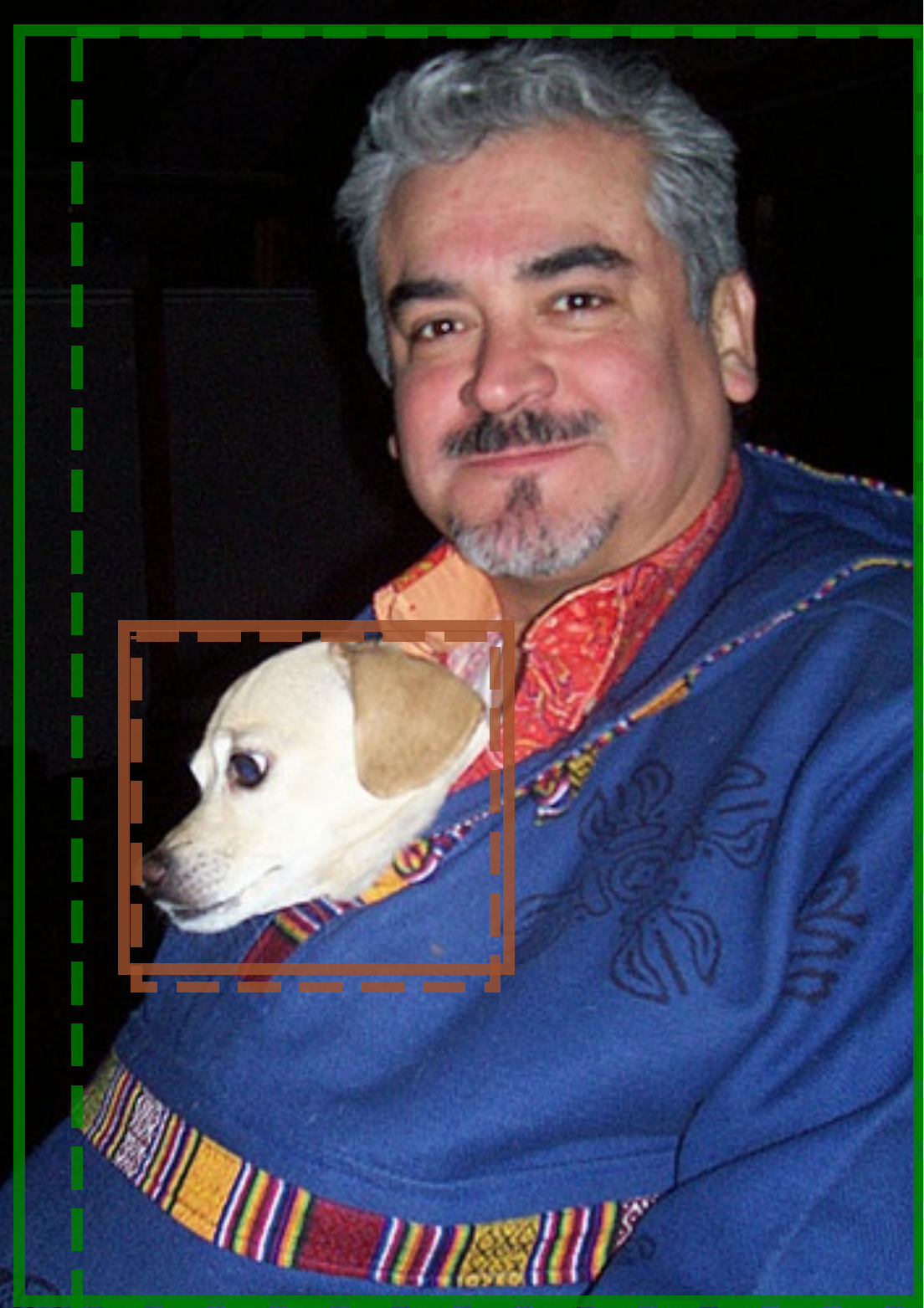}} \hspace{1pt}
    \subfigure{\includegraphics[width=0.2\linewidth]{./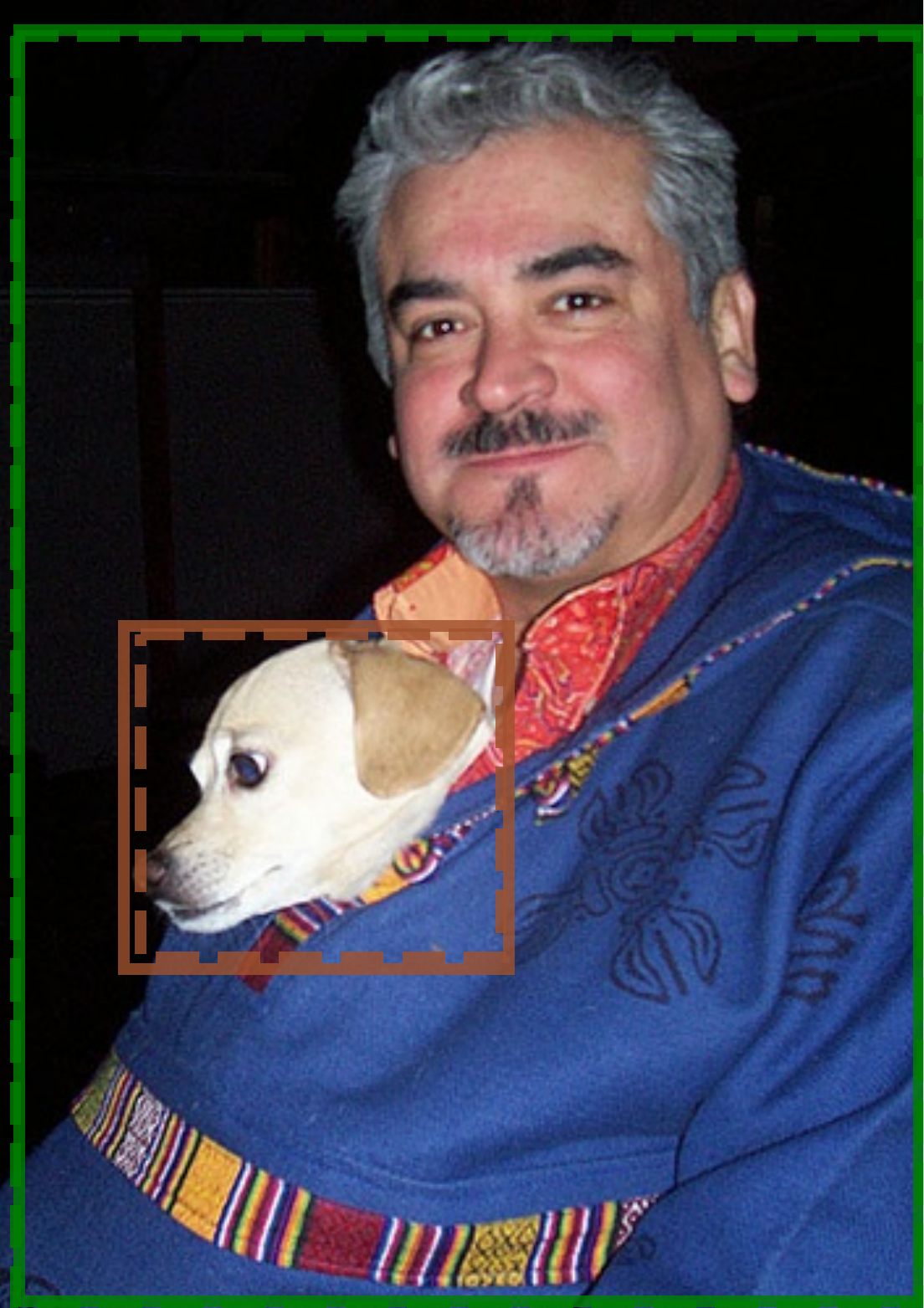}}

    \subfigure{\includegraphics[width=0.2\linewidth]{./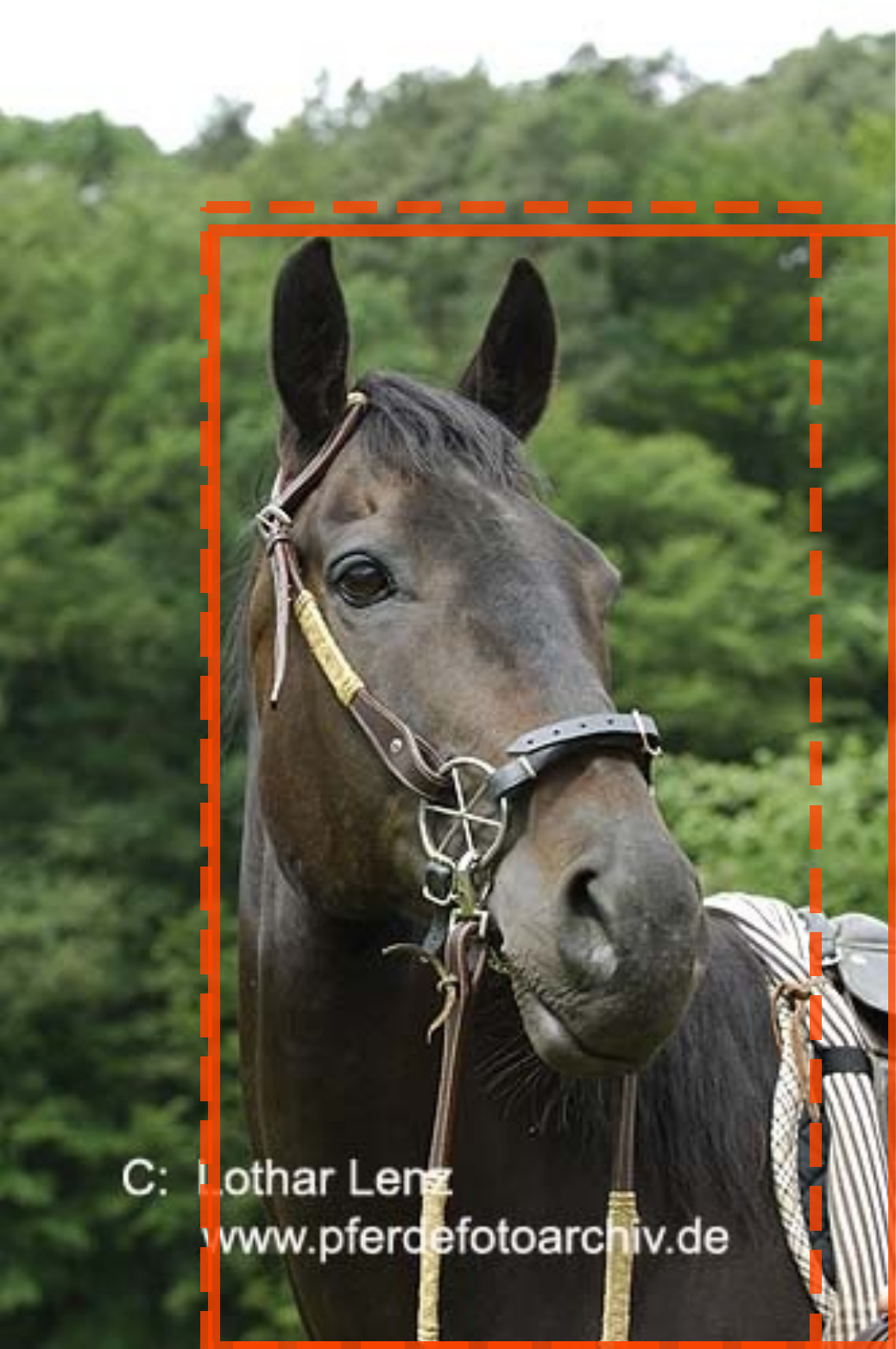}} \hspace{1pt}
    \subfigure{\includegraphics[width=0.2\linewidth]{./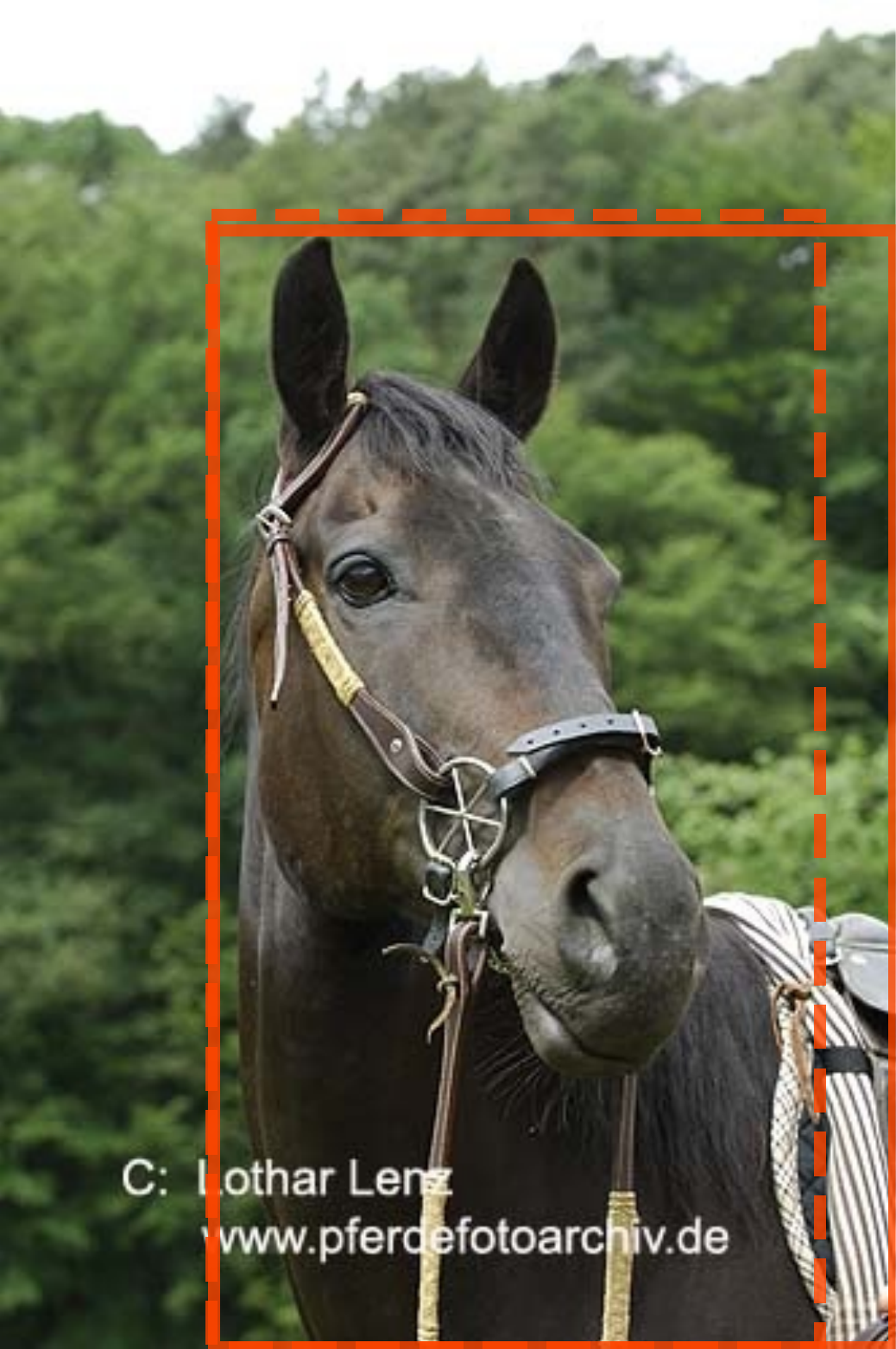}} \hspace{1pt}
    \subfigure{\includegraphics[width=0.2\linewidth]{./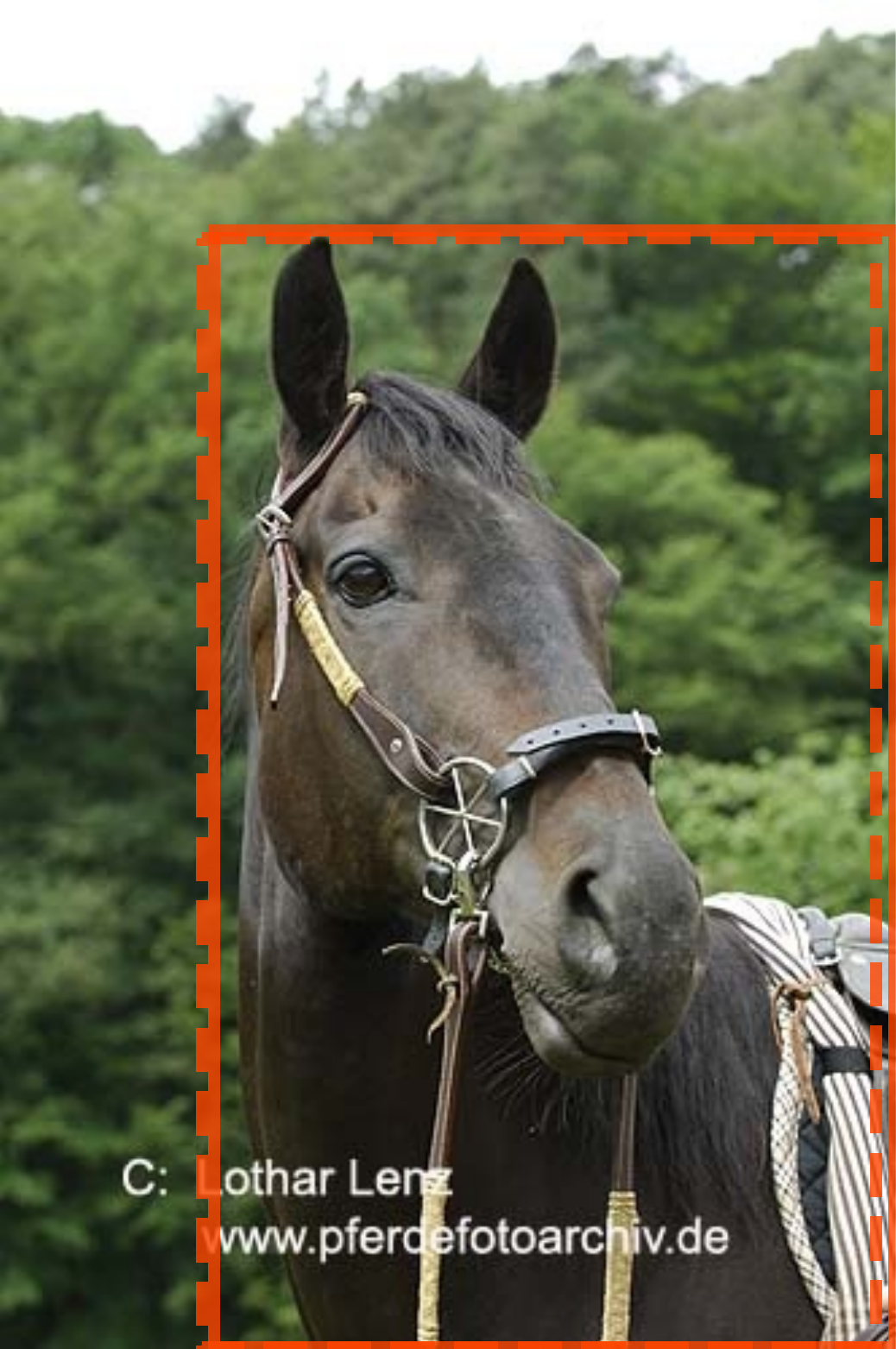}} \hspace{1pt}
    \subfigure{\includegraphics[width=0.2\linewidth]{./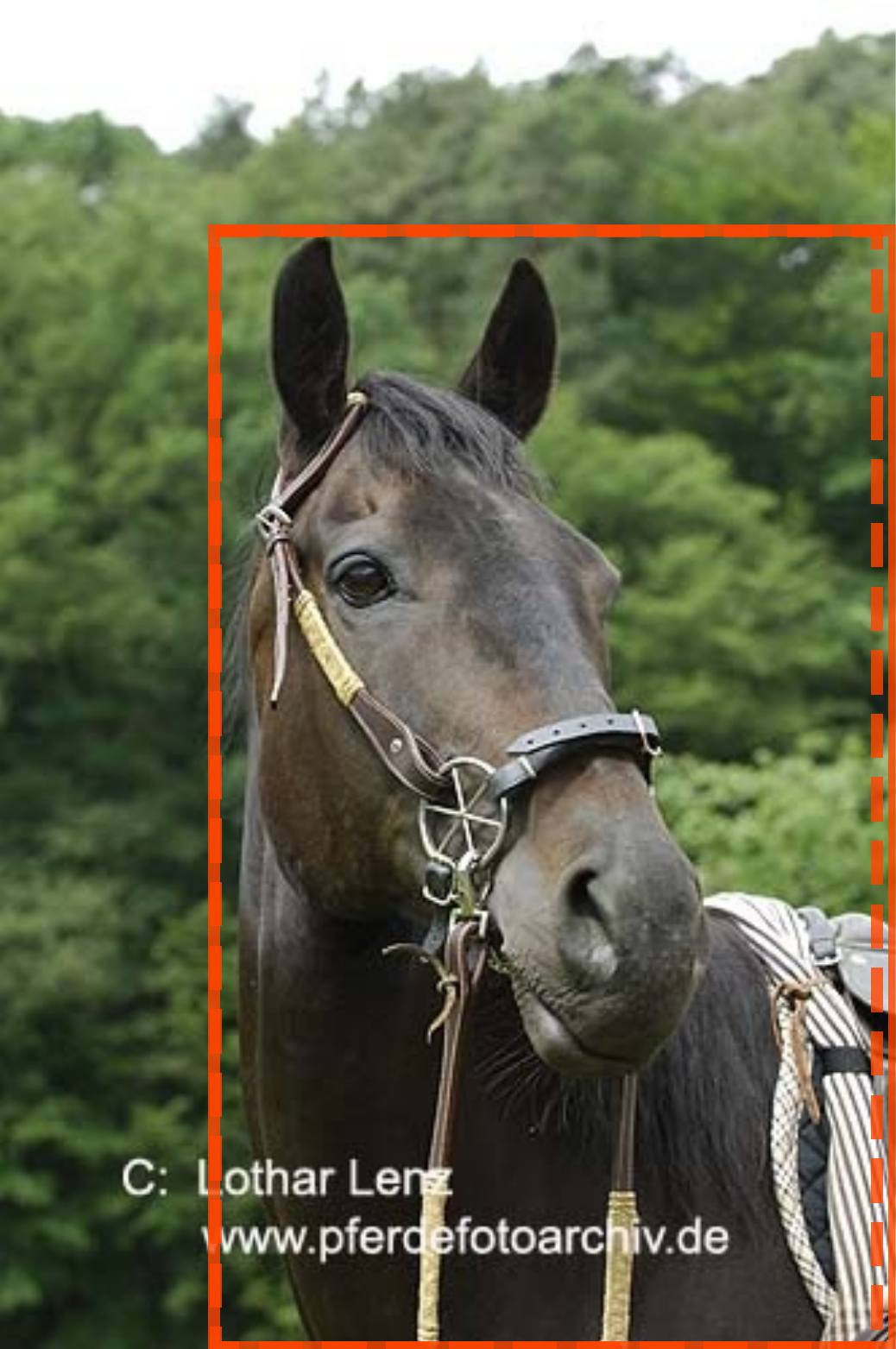}}
    \subfigure{\includegraphics[width=0.2\linewidth]{./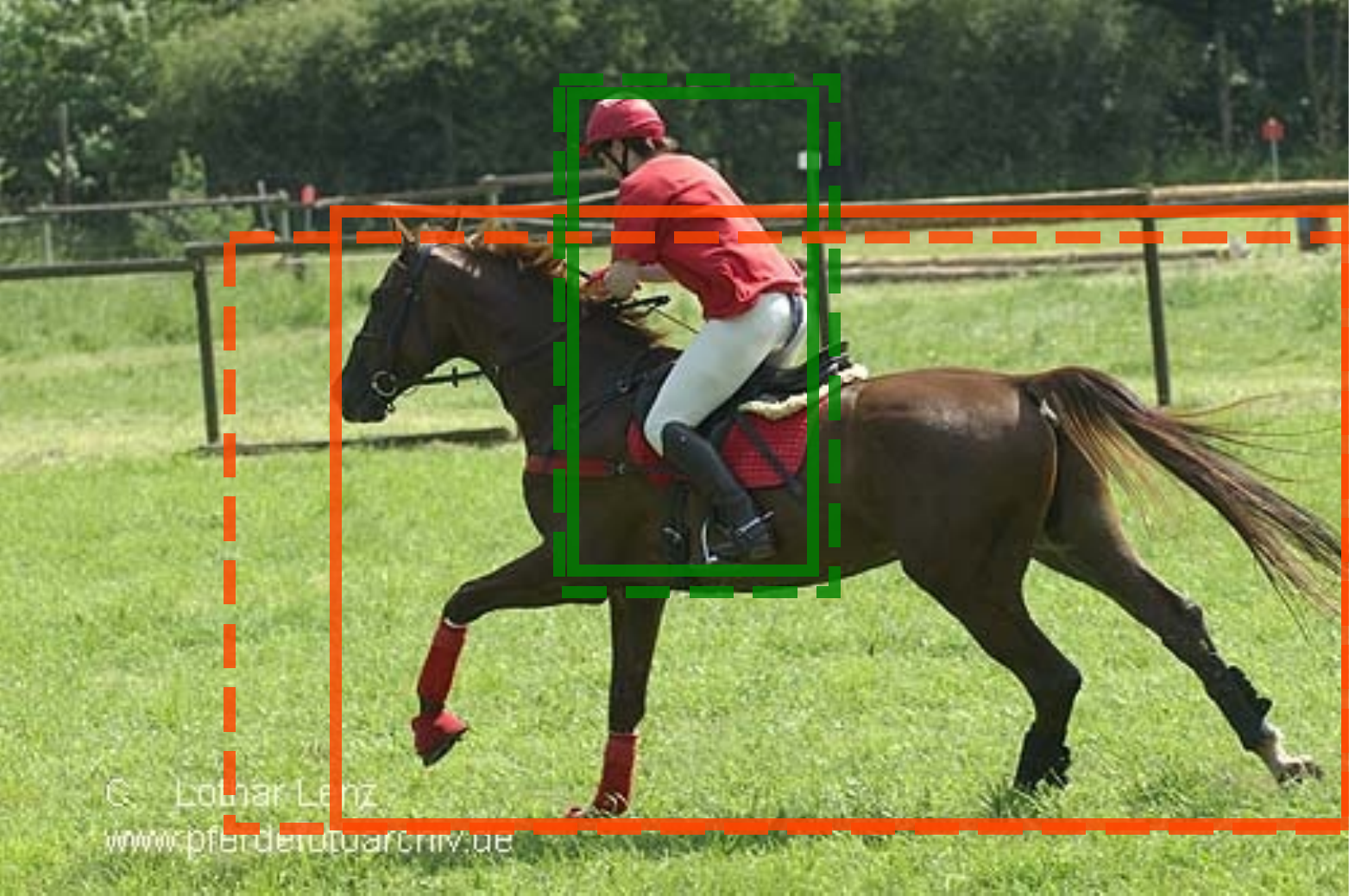}} \hspace{1pt}
    \subfigure{\includegraphics[width=0.2\linewidth]{./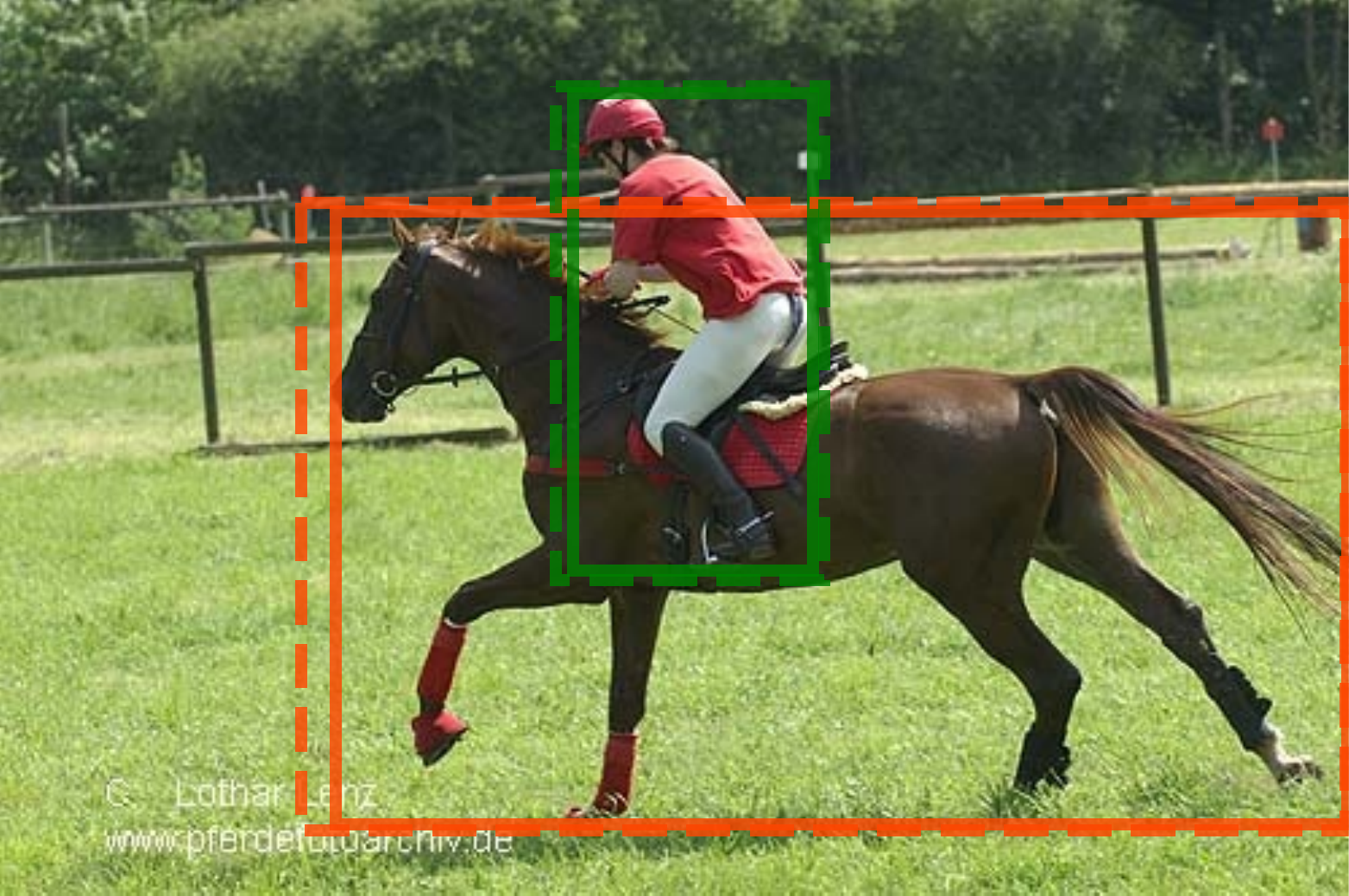}} \hspace{1pt}
    \subfigure{\includegraphics[width=0.2\linewidth]{./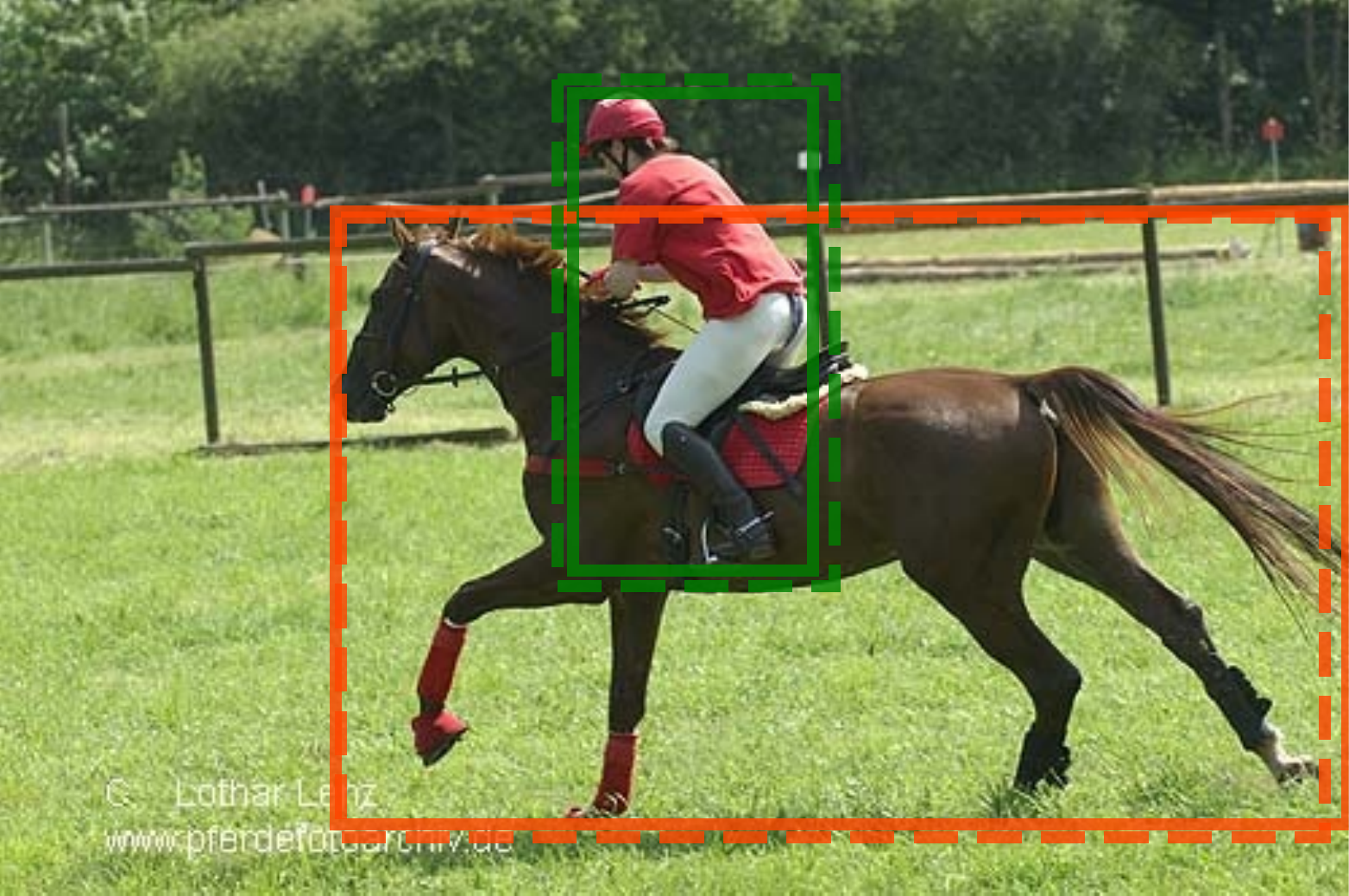}} \hspace{1pt}
    \subfigure{\includegraphics[width=0.2\linewidth]{./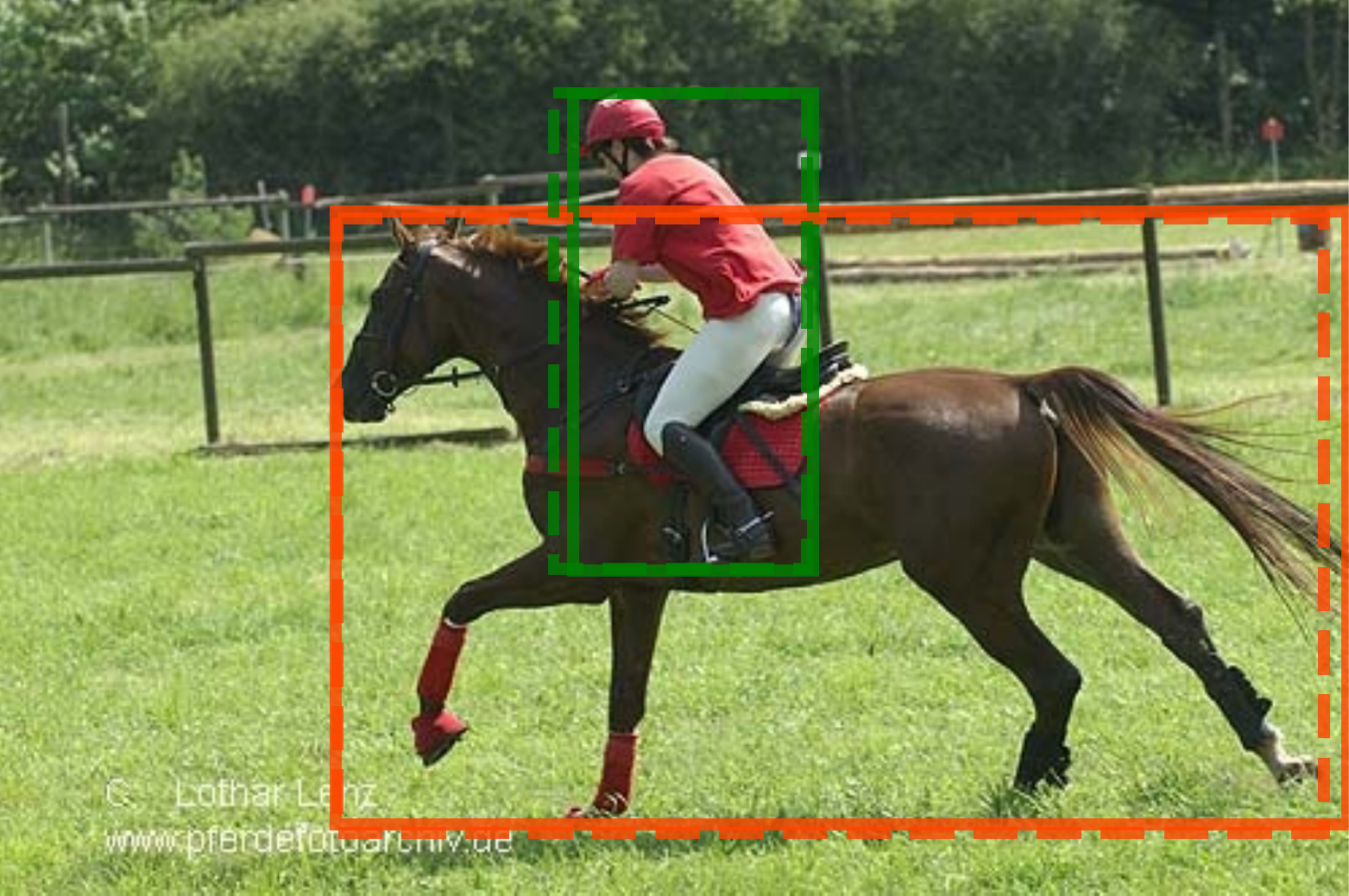}} \\

     \centering
      \hspace{1pt}
    \begin{minipage}{0.2\linewidth}
    \centerline{\includegraphics[width=1\linewidth]{./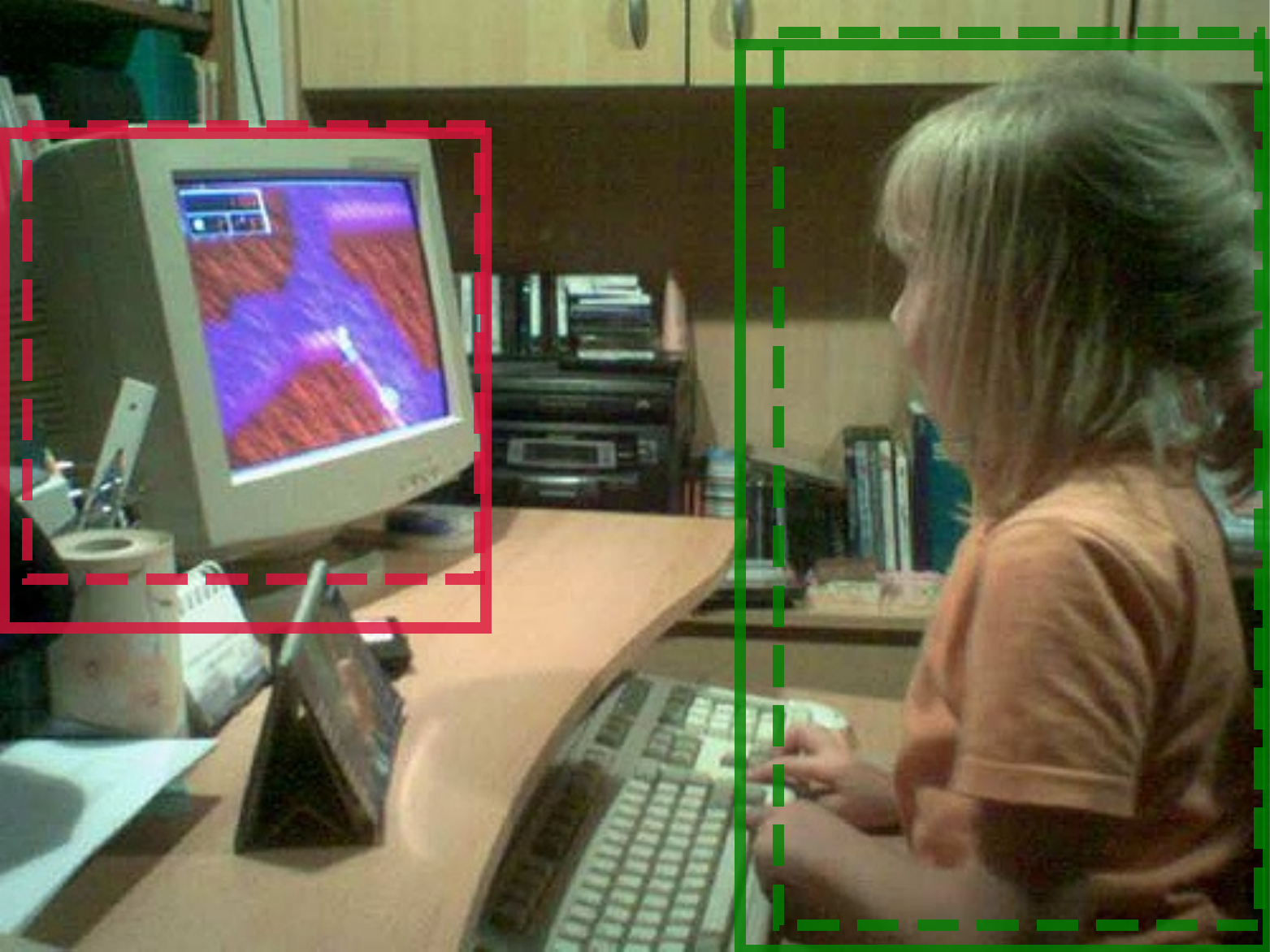}}
    \centerline{Smooth-$\ell_1$ Loss}
    \end{minipage}
    \hspace{1pt}
    \begin{minipage}{0.2\linewidth}
    \centerline{\includegraphics[width=1\linewidth]{./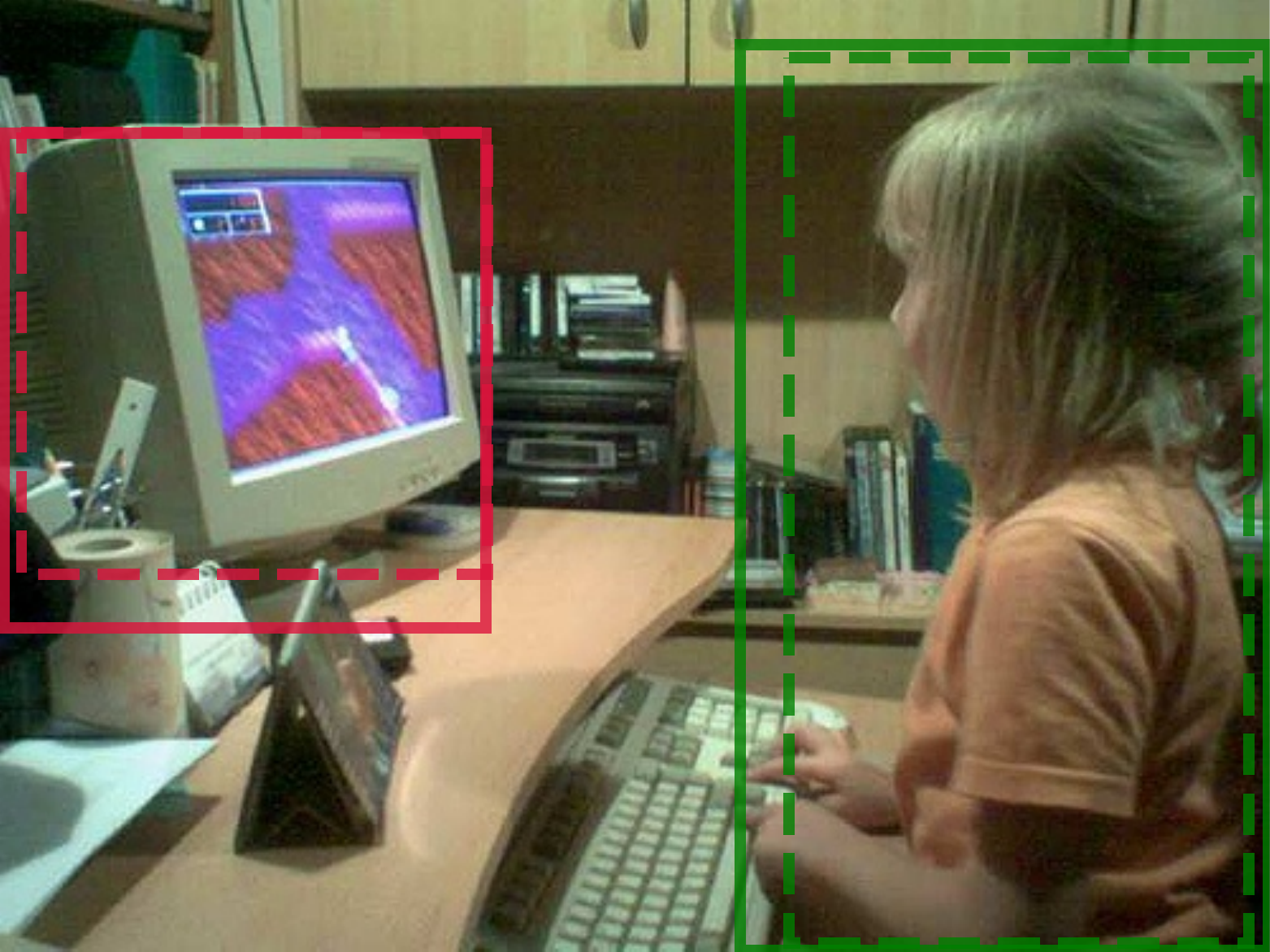}}
    \centerline{$GIoU$ Loss}
    \end{minipage}
    \hspace{1pt}
    \begin{minipage}{0.2\linewidth}
    \centerline{\includegraphics[width=1\linewidth]{./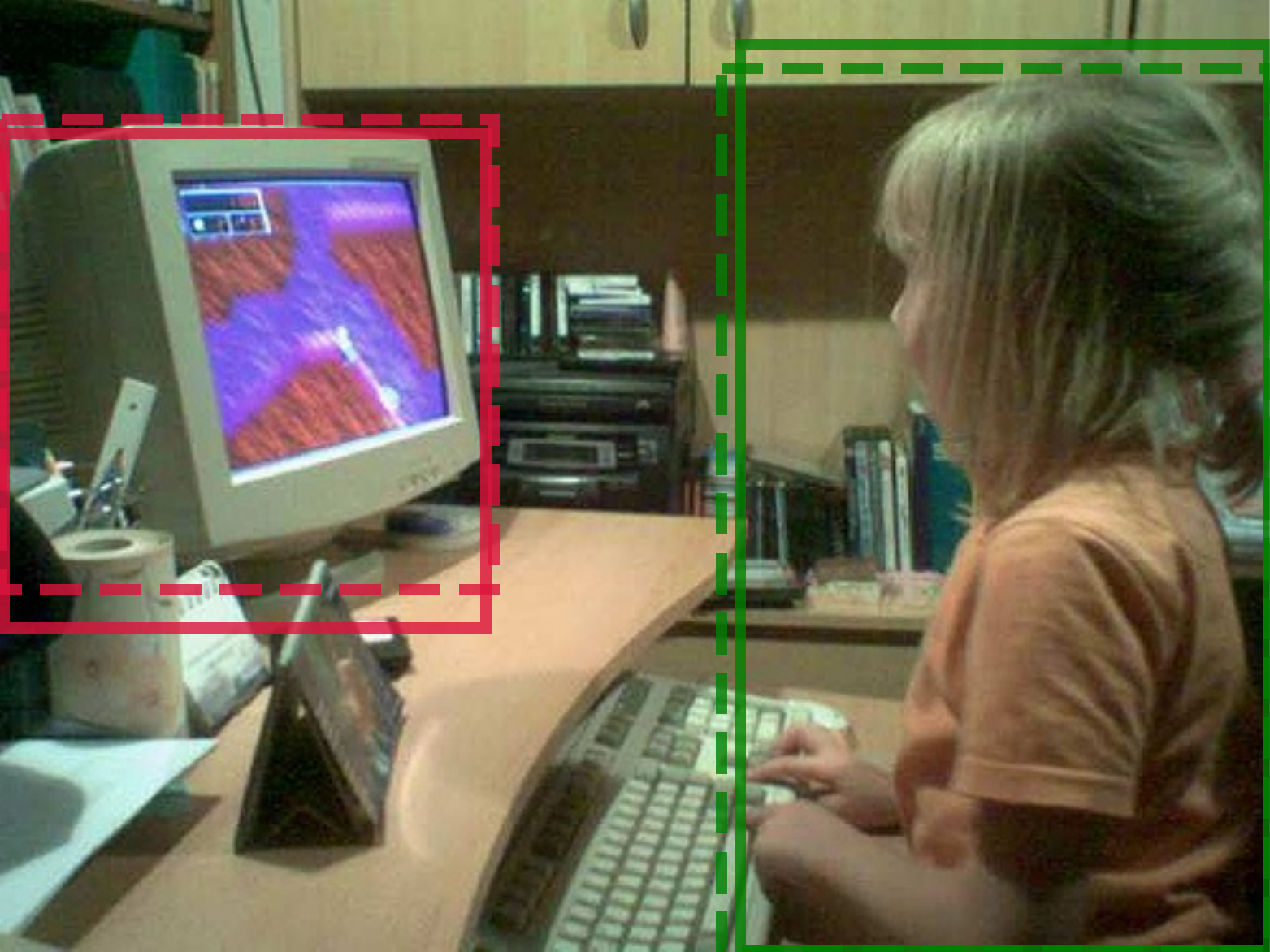}}
    \centerline{$CIoU$ Loss}
    \end{minipage}
    \hspace{1pt}
    \begin{minipage}{0.2\linewidth}
    \centerline{\includegraphics[width=1\linewidth]{./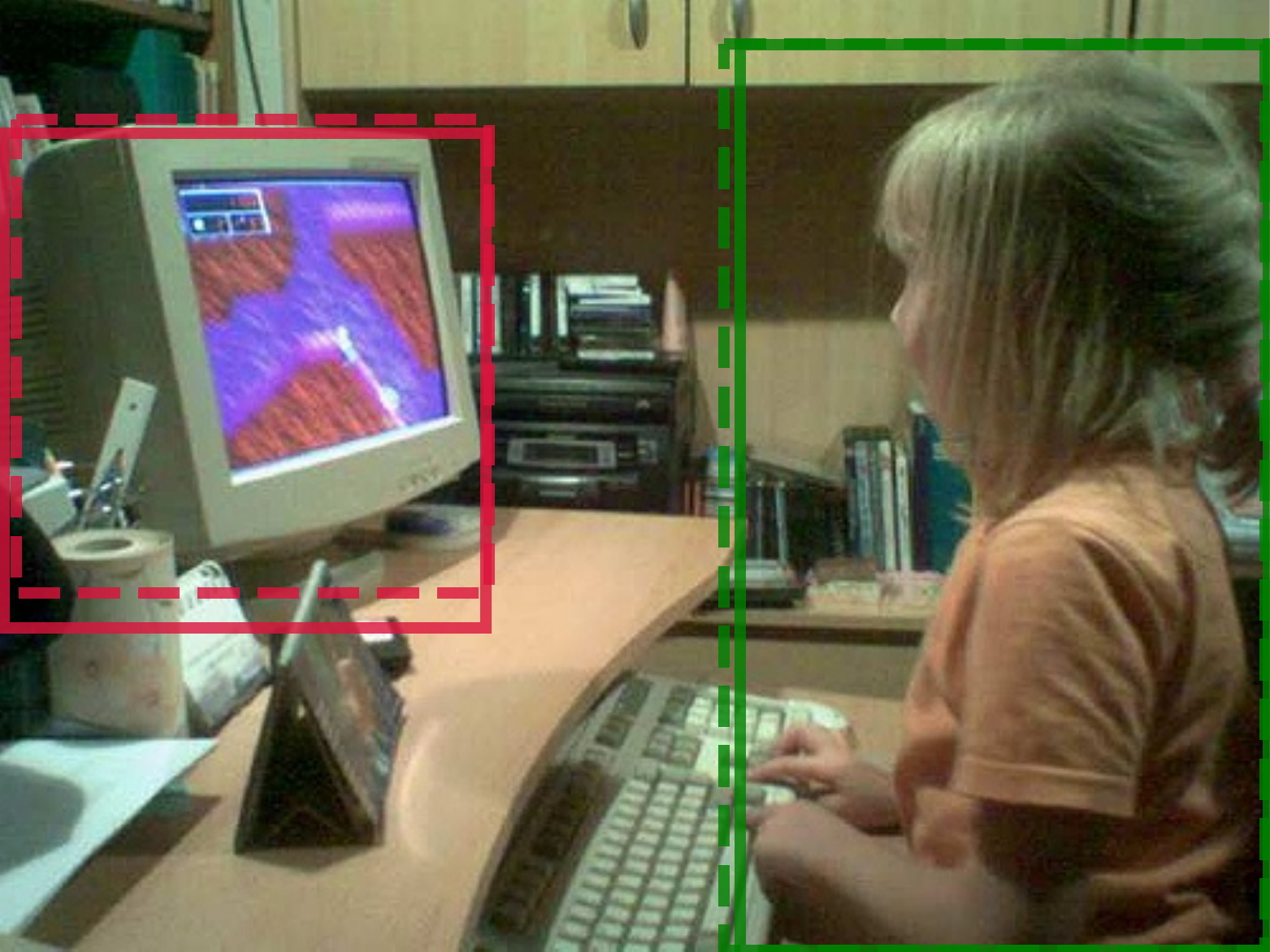}}
    \centerline{Smooth-$EIoU$ Loss}
    \end{minipage}
    \hspace{1pt}

    \caption{ Some test examples of the VOC2007\_test\_set  with  Faster-RCNN with Res50 backbone and FPN architecture trained using  Smooth-$\ell_1$ Loss, $GIoU$ Loss and Smooth-$EIoU$ Loss (\emph{left to right}). Ground-truth boxes are shown with solid lines and the predicted boxes are displayed with dashed lines. }
    \label{Figure.8}
\end{figure*}

\begin{IEEEbiography}[{\includegraphics[height=30mm,clip,keepaspectratio]{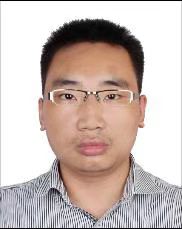}}]{Hanyang Peng} received a B.S. degree in measurement and control technology from the Northeast University of China, Shenyang, China, in 2008, an M.E. degree in detection technology and automatic equipment from the Tianjin University of China, Tianjin, China, in 2010, and a Ph.D. degree in pattern recognition and intelligence systems from the Institute of Automation, Chinese Academy of Sciences, Beijing, China, in 2017. He is currently with Southern University of Science and Technology, Shenzhen, China. His current research interests include computer vision, machine learning, deep learning and optimization.
\end{IEEEbiography}
\begin{IEEEbiography}[{\includegraphics[height=30mm,clip,keepaspectratio]{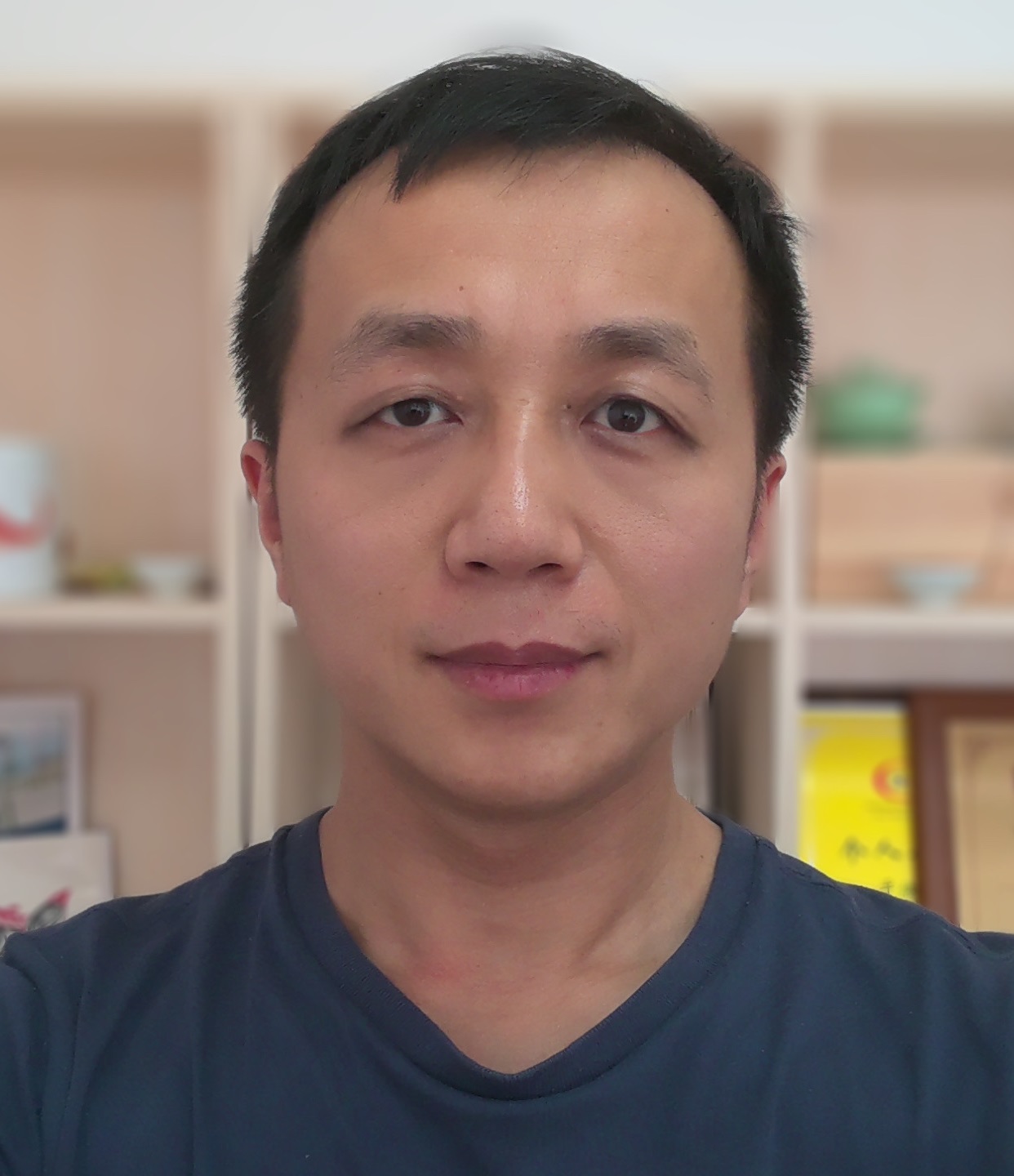}}]{Shiqi Yu} is currently an associate professor in the Department of Computer Science and Engineering, Southern University of Science and Technology, Shenzhen, China. He received his B.E. degree in computer science and engineering from the Chu Kochen Honors College, Zhejiang University in 2002, and Ph.D. degree in pattern recognition and intelligent systems from the Institute of Automation, Chinese Academy of Sciences in 2007. He worked as an assistant professor and an associate professor at Shenzhen Institutes of Advanced Technology, Chinese Academy of Sciences from 2007 to 2010, and as an associate professor at Shenzhen University from 2010 to 2019. His research interests include computer vision, pattern recognition and artificial intelligence.
\end{IEEEbiography}
\end{document}